\newcommand{\mO}{\mathbf{O}}
\newcommand{\mI}{\mathbf{I}}
\newcommand{\mP}{\mathbf{P}}
\newcommand{\mR}{\mathbf{R}}
\newcommand{\mA}{\mathbf{A}}
\newcommand{\mB}{\mathbf{B}}
\newcommand{\mC}{\mathbf{C}}
\newcommand{\mD}{\mathbf{D}}
\theoremstyle{plain}
\newtheorem{theorem}{Theorem}[section]
\theoremstyle{definition}
\newtheorem{definition}[theorem]{Definition}
\newtheorem{assumption}[theorem]{Assumption}
\theoremstyle{remark}
\icmltitlerunning{Quantum-Inspired Reinforcement Learning in the Presence of Epistemic Ambivalence}
\begin{document}

\twocolumn[
\icmltitle{Quantum-Inspired Reinforcement Learning in the Presence of Epistemic Ambivalence}

\icmlsetsymbol{equal}{*}

\begin{icmlauthorlist}
\icmlauthor{Alireza~Habibi}{equal,Tuebingen,Bochum}
\icmlauthor{Saeed~Ghoorchian}{equal,Bochum,SAP}
\icmlauthor{Setareh~Maghsudi}{Bochum,Bochum2}
\end{icmlauthorlist}

\icmlaffiliation{Tuebingen}{Department of Computer Science, University of Tübingen, Tübingen, Germany}
\icmlaffiliation{Bochum}{Faculty of Electrical Engineering and Information Technology, Ruhr University Bochum, Bochum, Germany}
\icmlaffiliation{Bochum2}{Faculty of Computer Science, Ruhr University Bochum, Bochum, Germany}
\icmlaffiliation{SAP}{SAP SE, Berlin, Germany}

\icmlcorrespondingauthor{Alireza~Habibi}{alireza.habibi@ruhr-uni-bochum.de}
\icmlcorrespondingauthor{Saeed~Ghoorchian}{saeed.ghoorchian@sap.com}
\icmlcorrespondingauthor{Setareh~Maghsudi}{setareh.maghsudi@ruhr-uni-bochum.de}

\icmlkeywords{Machine Learning,Reinforcement learning, decision-making, uncertainty, epistemic ambivalence, quantum-enhanced RL algorithm}

\vskip 0.3in
]

\printAffiliationsAndNotice{\icmlEqualContribution} 

\begin{abstract}
The complexity of online decision-making under uncertainty stems from the requirement of finding a balance between exploiting known strategies and exploring new possibilities. 
Naturally, the uncertainty type plays a crucial role in developing decision-making strategies that manage complexity effectively.
In this paper, we focus on a specific form of uncertainty known as \textit{epistemic ambivalence} (EA), which emerges from conflicting pieces of evidence or contradictory experiences. 
It creates a delicate interplay between uncertainty and confidence, distinguishing it from epistemic uncertainty that typically diminishes with new information. 
Indeed, ambivalence can persist even after additional knowledge is acquired. To address this phenomenon, we propose a novel framework, called the \textit{epistemically ambivalent Markov decision process} (EA-MDP), aiming to understand and control EA in decision-making processes. This framework incorporates the concept of a quantum state from the quantum mechanics formalism, and its core is to assess the probability and reward of every possible outcome. We calculate the reward function using quantum measurement techniques and prove the existence of an optimal policy and an optimal value function in the EA-MDP framework. We also propose the EA-epsilon-greedy Q-learning algorithm. To evaluate the impact of EA on decision-making and the expedience of our framework, we study two distinct experimental setups, namely the two-state problem and the lattice problem. Our results show that using our methods, the agent converges to the optimal policy in the presence of EA.
\end{abstract}

\section{Introduction}
Reinforcement Learning (RL) models an agent facing the exploration-exploitation dilemma: While being uncertain about some specific factors that determine the actions’ outcomes, the agent selects between an action that, according to her current belief, maximizes the reward and a seemingly suboptimal one, which, however, can result in gaining some information that increases future rewards. The type of uncertainty plays a crucial role in solving the exploration-exploitation dilemma. Thus, the core challenge is to efficiently understand the environmental uncertainties. So far, the literature mainly emphasizes two primary forms of uncertainty: \textit{Aleatoric uncertainty} and \textit{epistemic uncertainty}~\cite{liu2024towards, lockwood2022review, kahn2017uncertainty, lutjens2019safe}, described below. 
\begin{itemize}
\item Aleatoric uncertainty arises from the inherent randomness or stochastic nature of the environment in which the agent operates, such as randomness in reward generation processes, or state transitions~\cite{Zennaro20:USL, clements2019estimating}; as such, it can neither be reduced nor predicted before starting an experiment.
\item Epistemic uncertainty results from the agent's limited knowledge like incomplete understanding of the environment or an inadequate representation of some of the characteristics of the problem. This uncertainty encompasses various aspects of the RL problem, including uncertainty regarding the true model of the environment, insufficient knowledge about the outcomes of specific actions in certain states, and ambiguity concerning the optimal policy due to incomplete exploration of the state space~\cite{liu2024towards}. The agent can reduce epistemic uncertainty if relevant information becomes available. 
\end{itemize}
Effective decision-making necessitates that the agent adeptly manages both aleatoric and epistemic uncertainties, as this enhances performance and leads to more informed decisions in Reinforcement Learning (RL).

Besides aleatoric and epistemic uncertainties, other obstacles might hinder optimal decision-making. One is \textit{ambivalence}, which can inhibit the decision-making progress. Ambivalence differs from uncertainty in that it persists even after the information becomes available. Consequently, traditional approaches to modeling, quantifying, and addressing uncertainty may be inadequate or suboptimal in effectively managing ambivalence as a distinct challenge~\cite{lam2020ambivalence}. 
 In this work, we consider a novel form of uncertainty, called as Epistemic Ambivalence (EA). EA occurs when multiple interpretations, explanations, or courses of action are possible, particularly in cases where available when the data does not clearly support one viewpoint over another. 
In epistemology, which studies the nature and boundaries of knowledge, EA highlights the inherent limitations and complexities involved in understanding and reasoning about the world~\cite{amaya2021epistemic,williamson2021epistemological,lam2013vagueness}. 
While traditional uncertainty represents imperfect knowledge, EA represents a cognitive state in which conflicting pieces of evidence coexist, resulting in an uncertain decision-making process. Individuals may experience uncertainty or skepticism when supplied  with  contrasting or ambiguous information. For example, on investment decisions, a person may face conflicting information about a specific stock, which leads to EA. Even after performing extra research and gathering additional data, the person may remain uncertain about whether to buy or sell the stock because of the ongoing presence of conflicting evidence. 
Such examples illustrate how, when faced with contradictory data from credible sources, an agent may struggle to make an optimal decision, reflecting the nature of EA. The persistence of this uncertainty challenges traditional decision-making intuitions, highlighting the complexity of navigating conflicting evidence. Thus, a deeper understanding of EA can foster more flexible and adaptive decision-making strategies in complex and uncertain environments.

In quantum mechanics, quantum particles can exist in multiple states at the same time because of the principle of superposition, with their state remaining indeterminate until observed~\cite{nielsen2010quantum}. 
This principle makes an interesting connection to decision-making scenarios, where agents face uncertainty caused by conflicting information from diverse sources. 

In this paper, we exploit the analogy between quantum mechanic states and EA to develop a theoretical framework for decision-making under EA and a policy. Roughly stated, we map each piece of information to a quantum basis state and construct an EA quantum state that incorporates all of the information. This mapping, accompanied by using Dirac notation and mathematical formulations derived from quantum mechanics, assists the decision-makers in quantifying EA, i.e., calculating the probability of each option, hence enabling informed and effective decision-making.
We propose the EA-MDP framework to understand and control EA. We show that the optimal policy exists within this framework.
Additionally, we introduce the EA-epsilon-greedy Q-learning algorithm to evaluate the impact of EA on  decision-making. Two experiments are conducted to evaluate the performance and effectiveness of our approach.

The structure of the paper is as follows. In Section~\ref{sec:otherworks}, we provide an overview of the relevant literature on uncertainty in RL, its application in quantum mechanics, and vice versa. Section~\ref{sec:quantumbasic} provides a brief introduction to the principles of quantum mechanics and superposition. Section~\ref{sec:ProFor} outlines the problem formulation and implementation of EA by using quantum mechanics. In Section~\ref{sec:theo-res}, we present the theoretical results. Section~\ref{sec:expriment} details the experimental evaluation using two distinct problems, where the agent’s reward is calculated based on EA uncertainty, followed by the training process of the agent. Finally, Section~\ref{sec:conclusion} provides concluding remarks.
\section{Related Work}
\label{sec:otherworks}
Uncertainty exists in various elements of RL, including observations, actions, transition dynamics, and rewards~\cite{kakade2002approximately,puterman2014markov,boutilier2000stochastic,shapiro2021lectures,bertsekas1996neuro,wang2016dueling}. 
\citet{wang2020reinforcement} developed a robust reinforcement learning framework that allows agents to learn in environments with noise, where they can only observe perturbed rewards. \citet{zhang2020robust}
studied the vulnerability of Deep Reinforcement Learning (DRL) agents to adversarial attacks on state observations. The authors discovered that a robust policy significantly enhances DRL performance in various environments, even in the absence of an adversary. \citet{engel2005reinforcement} used Gaussian processes to model uncertainty in rewards, enhancing the agent's ability to learn from sparse data. \citet{kumar2020conservative} introduced conservative Q-learning, which addresses uncertainty in rewards by penalizing the overestimation of Q-values in offline learning. 

Quantum mechanics can enhance RL by leveraging quantum phenomena such as superposition, entanglement, and quantum parallelism. These features allow quantum algorithms to process and explore multiple states or actions simultaneously. It has the potential to accelerate the learning process~\cite{biamonte2017quantum,dunjko2016quantum,dunjko2017machine,saggio2021experimental}. An early paper by \citet{dong2008quantum} suggested that quantum mechanics can enhance learning processes. The epistemic perspective of quantum states has been deeply investigated and has shown how to interpret quantum states as states of knowledge~\cite{fraser2023quantum,pusey2012reality,caves2002quantum,healey2017quantum,spekkens2007evidence,leifer2014quantum}. \citet{neukart2018quantum} showed that the optimization of strategies and the efficiency of learning are improved by the implementation of quantum-enhanced reinforcement learning in finite-episode games. \citet{dalla2022quantum} extended the classical concept of RL to the quantum domain in the lattice problem and investigated its behavior in a noisy environment. \citet{dong2010robust} proposed a Quantum-inspired Reinforcement Learning (QiRL) algorithm or navigation control of autonomous mobile robots. 

EA has not been previously investigated in the literature related to artificial intelligence. Therefore, in this paper, we introduce EA in this field, specifically RL and quantum RL.
\section{Basics of Quantum Mechanics}
\label{sec:quantumbasic}
In quantum mechanics, the \textit{Dirac} notation, also called \textit{bra-ket} notation, describes vectors and operators in Hilbert space. Hilbert space represents a mathematical framework that generalizes the notation of Euclidean space to an infinite-dimensional space. The dimension of the Hilbert space depends on the specific system being studied.
The state of a quantum system is represented by a vector state in a Hilbert space, denoted by \textit{ket}
$\left| \psi \right\rangle \in \mathcal{H}$. 
The vector state contains all the information about the system that is known.\\ 
The quantum system can exist in multiple states simultaneously. This phenomenon, referred to as the \textit{superposition}, is a fundamental principle in quantum mechanics. Mathematically, it is expressed as a linear combination of basis states $\{\ket{j}\}$,
\begin{align}
	\ket{\psi} = \sum_j c_j \ket{j},
\end{align}
where the coefficients $c_j \in \mathbb{C}$ are complex probability amplitudes and $\sum_j {||c_j||^2}=1$. $||c_j||^2$ represents the probability of the quantum state $\ket{\psi}$ being in the basis state $\ket{j}$.\\
In quantum theory, complex probability amplitudes encode both magnitude and phase information, enabling the description of phenomena like superposition and interference. The squared magnitude of a probability amplitude gives the probability of an outcome, while the phase determines how different quantum states interfere, leading to effects such as constructive or destructive interference~\cite{ballentine1970statistical}. Thus, quantum probability is fundamentally distinct from classical probability because it emerges from the intrinsic uncertainty of quantum states.\\
\textit{bra} $\bra{\psi} = \ket{\psi}^\dag$ shows the conjugate transpose of ket $\ket{\psi}$, with $\dag$ being the conjugate transpose operation. The inner and outer products are well-defined in the Hilbert space:
\begin{itemize}
\item The \textbf{inner product} between two quantum states $\ket{\psi}$ and $\ket{\phi}$ is denoted as $\braket{\phi | \psi}$ and is a complex number. It quantifies the degree of overlap or projection between two quantum states. 
When the quantum state of the system is $\ket{\psi}$, 
the probability of measuring the system in the quantum state $\ket{\phi}$ is given by the magnitude squared of the inner product, $||\braket{\phi|\psi}||^2$.
\item The \textbf{outer product} of two states, $\ket{\psi}$ and $\ket{\phi}$ is denoted by ${\ket{\phi}\bra{\psi}}$.
This operator maps $\ket{\psi}$ to $\ket{\phi}$ and can be used in the construction of projectors and other operators.
\end{itemize}
In quantum mechanics, operators ${ \mO: \mathcal{H} \to \mathcal{H}}$ represent physical observations and quantum state transformations. Hermitian operators $ \mO^\dag =  \mO$ correspond to measurable quantities (i.e., observable quantities) with real eigenvalues. The unit operators $ \mO^\dag  \mO= \mO  \mO^\dag= \mI$, where $\mI$ is an identity operator, correspond to the evolution of the quantum states. In this paper, bold capital letters are used to represent the operators.

To extract information from a quantum state, it is necessary to measure it using an observable quantity. 
During the measurement process, the quantum state collapses into one of the eigenstates of the measurement operator (i.e., observable quantity). 
The result obtained after measuring the quantum system is called the outcome. It is essential to perform the measurements several times since the outcome is probabilistic and it is not possible to accurately anticipate the exact outcome of one individual measurement. Nevertheless, one can use the expectation value to estimate the statistical distribution of the measurement results, as given by the Born rule~\cite{griffiths2018introduction}.
The expectation value of an observable $ \mO$ for the quantum state $\left| \psi \right\rangle $ is represented as
\begin{align}
	\label{eq:evbasic}
	\langle  \mO \rangle_{\psi} = \langle \psi |  \mO | \psi \rangle.
\end{align}
This formula represents the statistical average of the measurement results related to the observable $ \mO$ while the system is in the state $\left| \psi \right\rangle $.
\section{Problem Formulation}
\label{sec:ProFor}
A Markov Decision Process (MDP) is a tuple ${\mathcal{M} = \langle \mathcal{S}, \mathcal{A}, R, p, \gamma \rangle}$, where $\mathcal{S}$ is the set of states, $\mathcal{A}$ is the set of actions, and $R$ is the reward function.
In addition, $p$ represents the state transition distribution and $\gamma$ is a discount factor discounting the long-term rewards.
To combine MDP and EA, we begin by defining the state space $\tilde{\mathcal{S}}$ and the action space $\tilde{\mathcal{A}}$ in a finite MDP in the presence of EA, and we refer to the resulting framework as EA-MDP. In an EA-MDP, the state space of an agent that interacts with the environment is defined as $ \mathcal{H}_{\tilde{\mathcal{S}}} = \mathcal{H}_{\mathcal{S}}\otimes \mathcal{H}_{\text{EA}} $, where $\mathcal{H}_{\mathcal{S}}$ is the Hilbert space without any EA, and $\mathcal{H}_{\text{EA}}$ corresponds to the Hilbert space with EA, with dimensions $|\mathcal{H}_{\mathcal{S}}|=n$ and $|\mathcal{H}_{\text{EA}}|=m$. 
The symbol $\otimes$ represents the tensor product between two quantum states. The tensor product of quantum states is equivalent to their Kronecker product.
We denote a quantum state by ${\left| \tilde s(t) \right\rangle \in \mathcal{H}_{\tilde{\mathcal{S}}} }$. At each time step $t$, the agent receives a representation of the quantum state of the environment $\ket{\tilde {S}(t)} \in \mathcal{H}_{\tilde{\mathcal{S}}}$, which we denote by
\begin{align}
	\label{eq:eawavefunction}
	\ket{\tilde s(t)}
	= 
	\ket{s_{t}} \otimes \ket{\psi_{s_{t}}(t)},
\end{align}
where $\ket{s_{t}}$ is the underlying state of the system at time $t$ and $ \ket{\psi_{s_{t}}(t)}$ is the \textit{epistemic ambivalent quantum state} (EA quantum state) corresponding to the state $s_{t}$. The states $\ket{s_{t}}$ and $\ket{\psi_{s_{t}}(t)}$ are elements of the Hilbert spaces $\mathcal{H}_{\mathcal{S}}$ and $\mathcal{H}_{\text{EA}}$, respectively. The state $\left| {{\psi_{s}}}(t) \right\rangle $ can be defined in various ways based on the model of the environment. For example, it can be defined as a superposition of $m$ different EA bases, as follows.
\begin{align}
	\label{eq:easuper}
	\ket{\psi_{s}(t)}
	= \sum_{j=0}^{m-1} {c_{s,j}(t)\ket{j}},
\end{align}
where $ ||c_{s,j}(t)||^2$ is the probability of the EA quantum state $\ket{\psi_{s}(t)}$ being at EA basis state $\left| {{j}} \right\rangle$ at time $t$.

In Equation (\ref{eq:eawavefunction}), we consider two subsystems for a given quantum system. One for the underlying states and another one for the EA quantum states. 
In general, the underlying state can serve as a classical description of certain aspects of the environment, such as restricting the agent to occupy only one classical state at each time step.
The representation given in Equation (\ref{eq:eawavefunction}) is crucial for integrating classical and quantum aspects within the framework of EA-MDPs.

We enhance the notation of each quantum state by considering its underlying state. Therefore, we use ${\tilde{\boldsymbol{s}}(s_t,t) \equiv \left| \tilde{s}(t) \right\rangle} $ to denote the quantum state that contains the state $s_t$ with corresponding EA quantum state. In the current formulation, with a slight abuse of notation, we assume $\tilde{\boldsymbol{s}}(\cdot)$ acts as a function that takes a state $s_t \in \mathcal{S}$ and time $t$ as the input and returns the quantum state corresponding to state $s_t$ at time $t$.
\begin{assumption}
\label{ass:time-independence}
In our problem formulation, we assume that each quantum state contains only a time-independent EA quantum state. Note that, the quantum state itself remains time-dependent that reflects the time-dependent transitions of underlying states of the environment. In other words, $\tilde{\boldsymbol{s}}(\cdot)$ is a stationary function with respect to the input $t$.
\end{assumption}
Based on Assumption \ref{ass:time-independence}, for all times $t$, the EA quantum state $\left| {{\psi _{s}}}(t) \right\rangle $ corresponding to a given state $s$ does not vary over time, which in turn implies that $c_{s,j}(t) = c_{s,j}$. Thus, for a given $s \in \mathcal{S}$, we have $ \left| {{\psi_{s}}}(t) \right\rangle = \left| {{\psi_{s}}}(t') \right\rangle,\, \forall t, t'$. Therefore, in the following, we drop the index $t$ in the notation of EA quantum state and use $\left| {{\psi_{s}}} \right\rangle$ instead. Similarly, we drop the corresponding time index in the notation of quantum states, i.e., ${\tilde{\boldsymbol{s}}(s) \equiv \left| {{\tilde{{s}}}} \right\rangle}$. An interesting research direction is to consider the dynamic case where EA quantum states evolve themselves, i.e., the operator $\left| {{\psi_{s}}}(t) \right\rangle$ being dependent on time $t$. For now, we continue with the simpler version of this problem, where EA quantum states remain fixed for each state $s$ of the environment throughout the play.

We formally define EA-MDP as a tuple $\tilde{\mathcal{M}} = \langle \tilde{\mathcal{S}}, \tilde{\mathcal{A}}, r, p, \gamma \rangle$, where $\tilde{\mathcal{A}}$ denotes the set of actions and $r$ represents the reward function for quantum states. We use $\tilde{\mathcal{A}}(\tilde{\boldsymbol{s}}(s))$ to represent the available actions for a given quantum state $\tilde{\boldsymbol{s}}(s)$. However, to simplify the notation, we consider a general set of actions $\tilde{\mathcal{A}}$ for all quantum states, i.e., $\tilde{A}(\tilde{\boldsymbol{s}}(S_t)) \equiv \tilde{A}_{t}$. At each time $t$, given $\tilde{\boldsymbol{s}}(S_t)$, the agent selects an action $\tilde{A}_{t} \in \tilde{\mathcal{A}}$.
Then, the agent receives a numerical reward $r_{t+1} \subset \mathbb{R}$ based on the reward function $r$, 
and the environment's quantum state 
evolves into a new
quantum state $\tilde{\boldsymbol{s}}(S_{t+1})$. 
The trajectory of the agent is represented by
\begin{align}
	\tilde{\boldsymbol{s}}(S_{0}), \tilde A_{0}, r_{1}, \tilde{\boldsymbol{s}}(S_{1}), \tilde A_{2}, r_{2}, \dots \,.
\end{align}
It is necessary to comprehend how the environment can provide a reward when an agent takes action. 
The rewards can be paid after measuring the quantum state
using a reward operator $\mR$. 
Due to the probabilistic nature of a single measurement's outcome, it is essential to perform many measurements, calculate the expectation value of the reward, and use it as the reward function. 
In this paper, we make an additional assumption while calculating the expectation value of the reward. 
\begin{assumption}
\label{assumption:rewardf}
For simplicity, we assume that the reward depends only on the next quantum state $\tilde{\boldsymbol{s}}(s')$, to which the environment transitions after taking action on the current quantum state $\tilde{\boldsymbol{s}}(s)$. This assumption in quantum mechanics implies that we should measure the reward operator exclusively in the next quantum state.
\end{assumption}

In this paper, we employ a quantum measurement technique known as projective measurements to provide a general and flexible description of quantum measurements~\cite{nielsen2010quantum}.
The outcome reward function, denoted as 
$\tilde{r}: \tilde \Omega \to \mathbb{R}$, assigns a reward for each measurement outcome ${\ket{\tilde \omega} \in \mathcal{H}_{\tilde{\mathcal{S}}} } $,
 where we also use $\ket{\tilde \omega}$ to refer to the outcome within the text. Here,
${\tilde \Omega = \{\ket{\tilde \omega}\}}$
represents the finite set of complete and orthogonal outcomes.
To determine the probability of measuring a particular outcome $\ket{\tilde \omega}$, we define a positive semi-definite operator, represented as ${\mP_{\tilde \omega}: \mathcal{H}_{\tilde{\mathcal{S}}} \to \mathcal{H}_{\tilde{\mathcal{S}}}}$, where 
${\sum_{\tilde \omega \in \tilde \Omega}\mP_{\tilde \omega} = \mI}$. 
The probability of measuring $\ket{\tilde \omega}$ given that the environment is in a quantum state $\tilde{\boldsymbol{s}}(s')\equiv \ket{\tilde s'} $ is
\begin{align}
\label{eq:projective-prob}
P_{\tilde \omega}\big(\tilde{\boldsymbol{s}}(s')\big)=\Braket{\tilde{s}'|\mP_{\tilde \omega}|\tilde{s}'},
\end{align}
with $\sum_{\tilde \omega} P_{\tilde \omega}(\tilde{\boldsymbol{s}}(s')) =1$.
In the projective measurement $ \mP_{\tilde{\omega}} = \ket{\tilde{\omega}} \bra{\tilde{\omega}} $, as a result, we have 
\begin{align}
    {P_{\tilde \omega}\big(\tilde{\boldsymbol{s}}(s')\big) = \left\|\braket{\tilde \omega | \tilde s'}\right\|^2}.
\end{align}

\begin{definition}
Given the quantum state $\tilde{\boldsymbol{s}}(s')$, the expectation value of the reward is given by
\begin{align}
\label{eq:rewardexptildeomega}
r\big(\tilde{\boldsymbol{s}}(s')\big) \buildrel\textstyle.\over= \sum_{\tilde \omega} 
P_{\tilde \omega}\big(\tilde{\boldsymbol{s}}(s')\big)\tilde{r}({\tilde \omega}).
\end{align}
\end{definition}
The reward mechanism for EA-MDP can be comprehended through the joint probability
\begin{align}
p\big(\tilde{\boldsymbol{s}}(s'), \tilde{r}(\tilde\omega)&| \tilde{\boldsymbol{s}}(s),\tilde a\big) 
 = \text{Pr}\Big\{ \tilde{{S}}_t = \tilde{\boldsymbol{s}}(s'), \nonumber\\
&R_t = \tilde{r}(\tilde\omega) | \tilde{{S}}_{t-1} = \tilde{\boldsymbol{s}}(s),\tilde A_{t-1}=\tilde a 
\Big\},
\end{align}
for all $s,s' \in \mathcal{S}$, $\tilde\omega \in \tilde\Omega$, and $\tilde a \in \tilde{\mathcal{A}}$. 
The function $p$ represents the probability of the reward $\tilde{r}(\tilde\omega)$ received when the action $\tilde a$ is taken from the quantum state $\tilde{\boldsymbol{s}}(s)$ to the quantum state $\tilde{\boldsymbol{s}}(s')$.
The function $p: {\mathcal{S}} \times \tilde{\Omega} \times {\mathcal{S}} \times \tilde{\mathcal{A}} \to [0,1]$ satisfies
\begin{align}
\label{eq:sum-p-ea-mdp}
\sum_{s' \in S}
\sum_{\tilde\omega \in \tilde\Omega}
p\left(\tilde{\boldsymbol{s}}(s'), \tilde{r}(\tilde\omega)| \tilde{\boldsymbol{s}}(s),\tilde{a}\right) =1,  \forall {s \in \mathcal{S}, \tilde a \in \tilde{\mathcal{A}}}.
\end{align}
The probabilities provided by $p$ define entirely the dynamics of the environment in an EA-MDP.
In MDP, the expected reward can be calculated as
\begin{align}
\label{eq:expect-reward-MDP}
r(s',a,s)=
\mathbb{E} \Big\{
R_t | S_{t-1}=s, A_{t-1}=a, S_t=s
\Big\} 
  \nonumber \\
=\sum_{r \in R} r \frac{p(s',r|s,a)}{p(s'|s,a)}.
\end{align}
By comparing (\ref{eq:rewardexptildeomega}) and (\ref{eq:expect-reward-MDP}) and determining the similarity between the probabilities of EA-MDP and MDP, we obtain
\begin{align}
\label{eq:prob-dynamic}
P_{\tilde \omega}\big(\tilde{\boldsymbol{s}}(s')\big) 
= \frac{p\big(\tilde{\boldsymbol{s}}(s'),\tilde{r}(\tilde \omega)|\tilde{\boldsymbol{s}}(s),\tilde a \big)}{p\big(\tilde{\boldsymbol{s}}(s')|\tilde{\boldsymbol{s}}(s),\tilde a\big)},	
\end{align}
which also satisfies  Equation (\ref{eq:sum-p-ea-mdp}). $p\big(\tilde{\boldsymbol{s}}(s')|\tilde{\boldsymbol{s}}(s),\tilde a\big) $ is the EA-MDP state transition distribution.
\section{Theoretical Results}
\label{sec:theo-res}
In EA-MDP, similar to traditional RL problems, the goal is to maximize the expected discounted return. The agent aims to find the optimal policy, which is a conditional distribution $\pi(\tilde a,\tilde{\boldsymbol{s}})$ that maximizes the value function. A stochastic policy is defined as a function ${\pi: \tilde{\mathcal{S}} \rightarrow \Delta(\tilde{\mathcal{A}})}$, where
${\Delta(\tilde{\mathcal{A}})}$ is the set of probability distributions over the action set $\tilde{\mathcal{A}}$.
In an EA-MDP, we define the value function $V_{\pi}^{\tilde{\boldsymbol{s}}}(s)$ and the action-value function $Q_{\pi}^{\tilde{\boldsymbol{s}}}(s, \tilde a)$ as 
\begin{align}
	\label{eq:value}
	V_{\pi}^{\tilde{\boldsymbol{s}}}(s) & \buildrel\textstyle.\over= \mathbb{E}_\pi \Bigl[G_t \mid \tilde S_{t} = {\tilde{\boldsymbol{s}}}(s)\Bigl], \\
	Q_{\pi}^{\tilde{\boldsymbol{s}}}(s, \tilde a) & \buildrel\textstyle.\over= \mathbb{E}_\pi\Bigl[G_t \mid \tilde S_{t} = {\tilde{\boldsymbol{s}}}(s), \tilde A_{t} = \tilde a\Bigl],
\end{align}
where  $G_t = \sum_{k=0}^{\infty} \gamma^{k} r_{t+k+1}=r_{t+1}+\gamma G_{t+1}$ is the discounted cumulative reward.
\begin{theorem}[Bellman equations in EA-MDP]
\label{theorem:vqfunc}
In an EA-MDP, denoting as ${\tilde{\mathcal{M}} = \langle \tilde{\mathcal{S}}, \tilde{\mathcal{A}}, r, p, \gamma \rangle}$, 
with a fixed stochastic policy $\pi: \tilde{\mathcal{S}} \rightarrow \Delta(\tilde{\mathcal{A}})$ and a fixed $\tilde{\boldsymbol{s}}(\cdot)$, we have 
\begin{align}
V_{\pi}^{\tilde{\boldsymbol{s}}}(s) &= \sum\limits_{\tilde a \in \tilde A} \pi \big(\tilde a|{{\tilde{\boldsymbol{s}}}}(s)\big)\sum\limits_{s' \in {\cal S}} {p\big(\tilde{\boldsymbol{s}}(s')|\tilde{\boldsymbol{s}}(s),\tilde a\big)} \nonumber\\ 
&\times \Bigl[ r\big(\tilde{\boldsymbol{s}}(s')\big) + \gamma {V_{\pi}^{\tilde{\boldsymbol{s}}}(s')} \Bigl],
\label{eq:value_eamdp}
	\\
Q_{\pi}^{\tilde{\boldsymbol{s}}}(s, \tilde a) & = \sum_{s' \in \mathcal{S}} p\big({\tilde{\boldsymbol{s}}(s')} | {\tilde{\boldsymbol{s}}(s)}, \tilde a\big) \Bigl[ r\big({\tilde{\boldsymbol{s}}(s')}\big) \nonumber\\
&+ \gamma \sum_{\tilde a' \in \mathcal{\tilde A}} \pi\big(\tilde a' | \tilde{\boldsymbol{s}}(s')\big) Q_{\pi}^{\tilde{\boldsymbol{s}}}(s', \tilde a')\Bigl].
\label{eq:qvalue_eamdp}
\end{align}
\end{theorem}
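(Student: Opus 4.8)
The plan is to mirror the classical derivation of the Bellman expectation equations, replacing classical states by quantum states $\tilde{\boldsymbol{s}}(s)$ and using the joint probability $p(\tilde{\boldsymbol{s}}(s'),\tilde{r}(\tilde\omega)\mid\tilde{\boldsymbol{s}}(s),\tilde a)$ together with the relation (\ref{eq:prob-dynamic}) that ties the measurement probability $P_{\tilde\omega}$ to the ratio of this joint probability and the EA-MDP transition distribution $p(\tilde{\boldsymbol{s}}(s')\mid\tilde{\boldsymbol{s}}(s),\tilde a)$. First I would start from the definition of $V_\pi^{\tilde{\boldsymbol{s}}}(s)$ in (\ref{eq:value}) and use the recursive decomposition $G_t = r_{t+1} + \gamma G_{t+1}$, so that $V_\pi^{\tilde{\boldsymbol{s}}}(s) = \mathbb{E}_\pi[r_{t+1}\mid \tilde S_t = \tilde{\boldsymbol{s}}(s)] + \gamma\,\mathbb{E}_\pi[G_{t+1}\mid \tilde S_t = \tilde{\boldsymbol{s}}(s)]$.

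Next I would expand each expectation by conditioning on the action $\tilde a$ drawn from $\pi(\cdot\mid\tilde{\boldsymbol{s}}(s))$, on the next quantum state $\tilde{\boldsymbol{s}}(s')$, and on the measurement outcome $\tilde\omega$. Summing over $s'\in\mathcal S$ and $\tilde\omega\in\tilde\Omega$ against the joint probability $p(\tilde{\boldsymbol{s}}(s'),\tilde{r}(\tilde\omega)\mid\tilde{\boldsymbol{s}}(s),\tilde a)$, the immediate-reward term becomes
\begin{align}
\sum_{\tilde a}\pi\big(\tilde a\mid\tilde{\boldsymbol{s}}(s)\big)\sum_{s'}\sum_{\tilde\omega} p\big(\tilde{\boldsymbol{s}}(s'),\tilde{r}(\tilde\omega)\mid\tilde{\boldsymbol{s}}(s),\tilde a\big)\,\tilde{r}(\tilde\omega). \nonumber
\end{align}
Here I would invoke (\ref{eq:prob-dynamic}) to write $p(\tilde{\boldsymbol{s}}(s'),\tilde{r}(\tilde\omega)\mid\tilde{\boldsymbol{s}}(s),\tilde a) = P_{\tilde\omega}(\tilde{\boldsymbol{s}}(s'))\,p(\tilde{\boldsymbol{s}}(s')\mid\tilde{\boldsymbol{s}}(s),\tilde a)$, factor out the transition distribution, and recognize the inner sum $\sum_{\tilde\omega}P_{\tilde\omega}(\tilde{\boldsymbol{s}}(s'))\,\tilde{r}(\tilde\omega)$ as exactly $r(\tilde{\boldsymbol{s}}(s'))$ by the Definition in (\ref{eq:rewardexptildeomega}). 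This is the step where Assumption~\ref{assumption:rewardf} does the real work: because the reward depends only on the next quantum state, the outcome-reward sum collapses to $r(\tilde{\boldsymbol{s}}(s'))$ independently of how we reached $s'$, which is what lets the measurement probabilities telescope against the transition probabilities. For the discounted future term, I would use the Markov property of the EA-MDP — the distribution of $G_{t+1}$ given $\tilde S_{t+1} = \tilde{\boldsymbol{s}}(s')$ does not depend on the past — to identify $\mathbb{E}_\pi[G_{t+1}\mid \tilde S_{t+1} = \tilde{\boldsymbol{s}}(s')]$ with $V_\pi^{\tilde{\boldsymbol{s}}}(s')$, and sum over $s'$ against $\sum_{\tilde\omega}p(\tilde{\boldsymbol{s}}(s'),\tilde{r}(\tilde\omega)\mid\tilde{\boldsymbol{s}}(s),\tilde a) = p(\tilde{\boldsymbol{s}}(s')\mid\tilde{\boldsymbol{s}}(s),\tilde a)$ (using (\ref{eq:sum-p-ea-mdp}) and that $\sum_{\tilde\omega}P_{\tilde\omega}=1$). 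Combining the two terms yields (\ref{eq:value_eamdp}). The equation (\ref{eq:qvalue_eamdp}) for $Q_\pi^{\tilde{\boldsymbol{s}}}$ follows the same pattern, except the initial action $\tilde a$ is fixed (so there is no outer sum over $\tilde a$) and the recursion on $G_{t+1}$ is unrolled one half-step further by writing $V_\pi^{\tilde{\boldsymbol{s}}}(s') = \sum_{\tilde a'}\pi(\tilde a'\mid\tilde{\boldsymbol{s}}(s'))\,Q_\pi^{\tilde{\boldsymbol{s}}}(s',\tilde a')$, or equivalently by conditioning directly on $\tilde A_{t+1}=\tilde a'$.

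The main obstacle I anticipate is bookkeeping rather than conceptual: making precise what "the Markov property of the EA-MDP" means at the level of quantum states, since the state the agent actually tracks is the pair $(s_t,\ket{\psi_{s_t}})$, and one must be careful that, under Assumption~\ref{ass:time-independence}, the EA component is a deterministic function of the underlying state $s$ and so carries no additional memory — which is precisely why the process on $\{\tilde{\boldsymbol{s}}(s)\}_{s\in\mathcal S}$ inherits the Markov property. I would state this explicitly as a preliminary observation (the map $s\mapsto\tilde{\boldsymbol{s}}(s)$ is a bijection onto its image, so conditioning on $\tilde S_t=\tilde{\boldsymbol{s}}(s)$ is equivalent to conditioning on the underlying chain being in $s$) before running the expectation expansion, so that all the sums over $s'$ and the identification of conditional expectations with value functions are rigorously justified. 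The remaining work — linearity of expectation, reindexing sums, and substituting (\ref{eq:prob-dynamic}) and (\ref{eq:rewardexptildeomega}) — is routine.
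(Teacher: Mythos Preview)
Your proposal is correct and follows essentially the same route as the paper's proof: expand $G_t = r_{t+1} + \gamma G_{t+1}$, condition on $(\tilde a, s', \tilde\omega)$, use (\ref{eq:prob-dynamic}) to factor the joint probability, and then collapse the $\tilde\omega$-sum via (\ref{eq:rewardexptildeomega}) to produce $r(\tilde{\boldsymbol{s}}(s'))$. Your additional remarks on Assumption~\ref{ass:time-independence} and the bijectivity of $s\mapsto\tilde{\boldsymbol{s}}(s)$ make explicit a Markov-property justification that the paper leaves implicit, but otherwise the arguments coincide.
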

\begin{proof}
See Section~\ref{append:vqfunc} in the Appendix.
\end{proof}
\begin{theorem}[Bellman contraction for EA-MDP]
\label{theorem:ea-bellman-contraction}
In an EA-MDP, denoted as $\tilde{\mathcal{M}} = \langle \tilde{\mathcal{S}}, \tilde{\mathcal{A}}, r, p, \gamma \rangle$, the Bellman operator $T$ for a value function is defined as
\begin{align}
\left( {TV^{{\bf{\tilde s}}}} \right)(s) = \mathop {\max }\limits_{\tilde a \in \tilde A} \Big[& \sum\limits_{s' \in S}  p\big({\bf{\tilde s}}(s')|{\bf{\tilde s}}(s),\tilde a\big) \nonumber\\
&\times\Bigg({ r\big({\bf{\tilde s}}(s')\big) + \gamma V^{{\bf{\tilde s}}}(s')}\Big)  \Bigg],
\end{align}
and is a contraction mapping.
\end{theorem}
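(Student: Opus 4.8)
The plan is to prove that $T$ is a $\gamma$-contraction with respect to the supremum norm, which — since the discount factor satisfies $\gamma\in[0,1)$ — is exactly what ``contraction mapping'' means here, and the argument mirrors the classical proof for the Bellman optimality operator. I would work in the space $\mathcal{B}$ of bounded functions $V^{\tilde{\boldsymbol{s}}}\colon \mathcal{S}\to\mathbb{R}$ (with $\mathcal{S}$ finite this is just $\mathbb{R}^{|\mathcal{S}|}$), equipped with $\|V^{\tilde{\boldsymbol{s}}}\|_\infty=\max_{s\in\mathcal{S}}|V^{\tilde{\boldsymbol{s}}}(s)|$. A short preliminary step records that $T$ indeed maps $\mathcal{B}$ into $\mathcal{B}$: by Equation~(\ref{eq:rewardexptildeomega}), $r(\tilde{\boldsymbol{s}}(s'))$ is a convex combination of the finitely many values $\tilde r(\tilde\omega)$, hence bounded, and combined with a bounded $V^{\tilde{\boldsymbol{s}}}$ and the maximum over the (finite) action set $\tilde{\mathcal{A}}$, the image $TV^{\tilde{\boldsymbol{s}}}$ is bounded.

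For the core estimate I would fix an arbitrary $s\in\mathcal{S}$ and two value functions $V_1,V_2\in\mathcal{B}$, and apply the elementary inequality $|\max_{\tilde a}f(\tilde a)-\max_{\tilde a}g(\tilde a)|\le\max_{\tilde a}|f(\tilde a)-g(\tilde a)|$ to the two maximizations over $\tilde a\in\tilde{\mathcal{A}}$ defining $(TV_1)(s)$ and $(TV_2)(s)$. Inside the absolute value, for each fixed $\tilde a$ the reward terms $r(\tilde{\boldsymbol{s}}(s'))$ are identical and cancel, leaving
\begin{align}
\big|(TV_1)(s)-(TV_2)(s)\big| \le \gamma\,\max_{\tilde a\in\tilde{\mathcal{A}}}\Big|\sum_{s'\in\mathcal{S}} p\big(\tilde{\boldsymbol{s}}(s')|\tilde{\boldsymbol{s}}(s),\tilde a\big)\big(V_1(s')-V_2(s')\big)\Big|.
\end{align}
Bounding $|V_1(s')-V_2(s')|\le\|V_1-V_2\|_\infty$ termwise and using that $p(\cdot|\tilde{\boldsymbol{s}}(s),\tilde a)$ is a probability distribution over $\mathcal{S}$ — which follows from Equation~(\ref{eq:sum-p-ea-mdp}) by summing out $\tilde\omega$, giving $\sum_{s'\in\mathcal{S}}p(\tilde{\boldsymbol{s}}(s')|\tilde{\boldsymbol{s}}(s),\tilde a)=1$ — the right-hand side is at most $\gamma\|V_1-V_2\|_\infty$, uniformly in $\tilde a$ and $s$. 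Taking the maximum over $s$ yields $\|TV_1-TV_2\|_\infty\le\gamma\|V_1-V_2\|_\infty$, as required.

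I do not anticipate a genuine obstacle; the only points needing care are (i) deriving $\sum_{s'}p(\tilde{\boldsymbol{s}}(s')|\tilde{\boldsymbol{s}}(s),\tilde a)=1$ explicitly from Equation~(\ref{eq:sum-p-ea-mdp}) rather than asserting it, and (ii) using the $\max$--$\max$ inequality symmetrically so that one obtains a bound on the absolute value, not merely a one-sided estimate. If one subsequently wants existence and uniqueness of the optimal value function $V^*$ (and hence of an optimal policy), this follows from the Banach fixed-point theorem together with completeness of $(\mathcal{B},\|\cdot\|_\infty)$; but since the stated theorem only asks for the contraction property, the argument ends with the uniform estimate above.
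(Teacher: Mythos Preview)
Your proposal is correct and follows essentially the same route as the paper's own proof: both establish a $\gamma$-contraction in the supremum norm by applying the $|\max f-\max g|\le\max|f-g|$ inequality, cancelling the identical reward terms, and then using that $p(\cdot|\tilde{\boldsymbol{s}}(s),\tilde a)$ sums to one. Your version is in fact slightly more careful, explicitly justifying the well-definedness of $T$ on bounded functions and deriving the normalization from Equation~(\ref{eq:sum-p-ea-mdp}) rather than asserting it.
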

\begin{proof}
See Section~\ref{append:ea-bellman-contraction} of the Appendix.
\end{proof}
The optimal policy $\pi^\ast$ in EA-MDP maximizes the value function for all states. Hence, the optimal value function $V_{\pi^*}^{\tilde{\boldsymbol{s}}}$ is the maximum value function when following the optimal policy $\pi^\ast$. Formally, 
\begin{align}
    {V_{\pi^*}^{\tilde{\boldsymbol{s}}}(s)} \ge {V_{\pi}^{\tilde{\boldsymbol{s}}}(s)},\quad \forall s\in \mathcal{S} \text{ and } \forall \pi.
\end{align}
\begin{theorem}[Existence of an optimal value function and optimal policy in EA-MDP]
\label{theorem:ea-bellman-optimal}
In an EA-MDP, denoted as $\tilde{\mathcal{M}} = \langle \tilde{\mathcal{S}}, \tilde{\mathcal{A}}, r, p, \gamma \rangle$, there exists an optimal value function $V_{\pi^*}^{\tilde{\boldsymbol{s}}}$ and at least one optimal policy $\pi^*$.
\end{theorem}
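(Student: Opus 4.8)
The plan is to reduce the statement to the classical development for finite discounted MDPs, using Theorem~\ref{theorem:ea-bellman-contraction} as the engine. First I would note that, under Assumptions~\ref{ass:time-independence} and~\ref{assumption:rewardf}, the reward $r\big(\tilde{\boldsymbol{s}}(s')\big) = \sum_{\tilde\omega} P_{\tilde\omega}\big(\tilde{\boldsymbol{s}}(s')\big)\tilde r(\tilde\omega)$ is a fixed real number for each $s' \in \mathcal{S}$, being a finite convex combination over the finite outcome set $\tilde\Omega$, and that $p\big(\tilde{\boldsymbol{s}}(s')|\tilde{\boldsymbol{s}}(s),\tilde a\big)$ is an ordinary stochastic kernel indexed by the finite sets $\mathcal{S}$ and $\tilde{\mathcal{A}}$. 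Hence value functions live in the complete normed space $\big(\mathbb{R}^{|\mathcal{S}|},\|\cdot\|_\infty\big)$ and $r$ is bounded; the EA-specific (tensor-product, measurement-based) structure contributes nothing beyond supplying these well-defined bounded rewards.

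Second, I would invoke the Banach fixed-point theorem: by Theorem~\ref{theorem:ea-bellman-contraction} the Bellman optimality operator $T$ is a $\gamma$-contraction on this complete space, so it has a unique fixed point $V^\ast$, and $T^k V \to V^\ast$ for every $V$. Third, I would construct a candidate optimal policy and identify it with $V^\ast$: let $\pi^\ast$ be the deterministic policy placing all mass in state $s$ on some $\tilde a \in \arg\max_{\tilde a \in \tilde A}\sum_{s'} p\big(\tilde{\boldsymbol{s}}(s')|\tilde{\boldsymbol{s}}(s),\tilde a\big)\big[r\big(\tilde{\boldsymbol{s}}(s')\big) + \gamma V^\ast(s')\big]$, where the maximum is attained since $\tilde{\mathcal{A}}$ is finite. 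Writing $T^{\pi}$ for the policy-evaluation operator given by the right-hand side of~\eqref{eq:value_eamdp}, it is affine with linear part of $\|\cdot\|_\infty$-operator norm $\gamma<1$, hence a contraction whose unique fixed point is $V_{\pi}^{\tilde{\boldsymbol{s}}}$ by Theorem~\ref{theorem:vqfunc}. The greedy construction gives $T^{\pi^\ast}V^\ast = TV^\ast = V^\ast$, so by uniqueness $V_{\pi^\ast}^{\tilde{\boldsymbol{s}}} = V^\ast$.

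Fourth, and this is the step requiring the most care, I would show $V^\ast$ dominates every policy's value, i.e. $V_{\pi}^{\tilde{\boldsymbol{s}}}(s) \le V^\ast(s)$ for all $s$ and all $\pi$. The key facts are monotonicity — if $U \le W$ pointwise then $TU \le TW$ and $T^{\pi}U \le T^{\pi}W$, immediate from nonnegativity of the kernels — together with the pointwise bound $T^{\pi}U \le TU$ for every $\pi$. Then for any $\pi$, from $V_{\pi}^{\tilde{\boldsymbol{s}}} = T^{\pi}V_{\pi}^{\tilde{\boldsymbol{s}}} \le T V_{\pi}^{\tilde{\boldsymbol{s}}}$ and iterating $T$, monotonicity yields $V_{\pi}^{\tilde{\boldsymbol{s}}} \le T^k V_{\pi}^{\tilde{\boldsymbol{s}}}$ for all $k$; letting $k\to\infty$ and using $T^k V_{\pi}^{\tilde{\boldsymbol{s}}} \to V^\ast$ gives $V_{\pi}^{\tilde{\boldsymbol{s}}} \le V^\ast$. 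Combined with $V_{\pi^\ast}^{\tilde{\boldsymbol{s}}} = V^\ast$, this proves $\pi^\ast$ is optimal and $V^\ast = V_{\pi^\ast}^{\tilde{\boldsymbol{s}}}$ is the optimal value function. I expect the main obstacle to be precisely this domination argument: one must establish monotonicity of both operators and justify the limit interchange, since the claim asserts the existence of a policy that is simultaneously optimal in every state, not merely the existence of a fixed point of $T$.
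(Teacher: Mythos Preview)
Your proposal is correct and follows the same approach as the paper: invoke Banach's fixed-point theorem on the contraction $T$ from Theorem~\ref{theorem:ea-bellman-contraction} to obtain a unique fixed point, then take the greedy policy with respect to that fixed point. Your argument is in fact more complete than the paper's own proof, which simply asserts that the fixed point is the optimal value function and that the greedy policy achieves it, without supplying the domination step $V_\pi^{\tilde{\boldsymbol{s}}} \le V^\ast$ that you correctly flag as the point requiring care.
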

\begin{proof}
See Section~\ref{append:ea-bellman-optimal} of the Appendix.
\end{proof}

We would like to highlight that our model and solution allows the agent to tackle the EA uncertainty, which is encoded via probability amplitudes and superposition of quantum states, by estimating the rewards using an extra expectation over the possible outcomes. 
Our model is a quantum-inspired model and differs from conventional MDPs with respect to reward and value function definition as there is no outcome set in conventional MDPs. 
In EA-MDP, there is an extra expectation over outcomes which is the result of having probability amplitudes and outcome set. This outcome set helps to assign rewards to the multiple configurations of conflicting evidence.
Due to space constraints, the rest of the theoretical results are moved to Appendices~\ref{append:rewardop} and \ref{sec:outcome-seperated}. In Appendix~\ref{append:rewardop}, we calculate the reward operator. Additionally, in Appendix~\ref{sec:outcome-seperated}, we explain the method for computing the reward function when the environment provides the outcome reward solely based on the EA component of the quantum state.

\section{Experimental Results and Analysis}
\label{sec:expriment}
To clarify more about the EA, we conduct two experiments using two toy models. In the initial experiment, we investigate a two-site toy model under the influence of EA.  In the second experiment, we explore a multi-site system in which the agent must identify the optimal pathway within a square lattice to reach the terminal state in the presence of EA. In both experiments, we calculate the optimal value function and examine the quantum interference resulting from the complex probability amplitudes.
These experiments will help us better understand how uncertainty and conflicting evidence impact decision-making in various scenarios. By analyzing the results, we can gain insights into how individuals navigate complex information to shape beliefs and make choices. 
For simplicity, in these experiments, we use a separated outcome in the EA basis, as fully explained in details in Appendix~\ref{sec:outcome-seperated}. We consider the separable outcome with form of
${ \ket{\tilde \omega} = \ket{s'} \otimes \ket{\omega^{(\text{EA})}} }$ and the set of outcomes  $\Omega^{(\text{EA})} = \{\ket{\omega^{(\text{EA})}}\}$, written in the EA basis. 
We assume that the outcome reward is only given for the EA components; in other words, ${\tilde{r}(\tilde \omega) = \tilde{r}(\omega^{(\text{EA})})}$.
In brief, our experiments in EA-MDP are defined as follows.
\begin{itemize}
\item A discrete set of underlying states 
\[{\mathcal{S}= \{s^1, s^2, \dots, s^n\}},\] 
where $s_t \in S$ is the underlying state of the environment at time $t$. In this example, the underlying state represents the position of the agent in the  lattice.
\item A discrete set of EA bases 
\[\{\left| 0 \right\rangle, \left| 1 \right\rangle, \dots, \left| m-1 \right\rangle\}.\]
At each time step, the environment is at quantum state $\tilde{\boldsymbol{s}}(S_{t})$, which contains the underlying state and a linear combination of EA bases.
\item A set of actions 
\[\tilde{\mathcal{A}}=\{\tilde a^1, \tilde a^2,..., \tilde a^k\}, \]
where $\tilde a_t$ shows the action that the agent selects at time $t$.
\item The set of transition functions $\mathcal P =\{ p\big(\tilde{\boldsymbol{s}}(s')|\tilde{\boldsymbol{s}}(s),\tilde a\big)\}$ that maps the quantum state $\tilde{\boldsymbol{s}}(s)$ with the action $\tilde a$ to the next state $\tilde{\boldsymbol{s}}(s')$ with a given transition probability. In the experimental result, the transition function is deterministic, meaning that $p\big(\tilde{\boldsymbol{s}}(s')|\tilde{\boldsymbol{s}}(s),\tilde a\big)=0$ or $1$, $\forall s, \tilde{a}$. 
\item A reward function $r: \tilde{\mathcal{S}} \to \mathbb{R}$, which calculates the expectation value of the reward
based on the next quantum state $\tilde{\boldsymbol{s}}(s')$, the discrete set of EA outcomes
\begin{align}
\label{eq:earewardoutcome}
    {\Omega^{(\text{EA})} = \{ \ket{\omega^{(\text{EA})}_{0}}, \ket{\omega^{(\text{EA})}_{1}}, \ldots, \ket{\omega^{(\text{EA})}_{p-1}} \}}, 
\end{align}
and the separated outcome reward function ${\tilde{r}: \Omega^{(\text{EA})} \to \mathbb{R}}$.
The reward is calculated with 
\begin{align}
\label{eq:earewardexp}
r\big(\tilde{\boldsymbol{s}}(s')\big) 
= \sum_{ \omega^{(\text{EA})} }
\tilde{r}(\omega^{(\text{EA})}) 
\left\|
\Braket{\omega^{(\text{EA})} |\psi_{s'}}\right\|^2
\end{align}
\item A discount factor $0 \le \gamma \le 1$.
\end{itemize}
To improve readability, we collect probability amplitudes and outcome rewards in vectors and use vector notations $ {\boldsymbol{c}_i = (c_{i,0}, c_{i,2}, \dots, c_{i,m})}$ and 
$ {\tilde{\boldsymbol{r}} = (\tilde{r}(\omega^{(\text{EA})}_{0}), \tilde{r}(\omega^{(\text{EA})}_{1}), \ldots, \tilde{r}(\omega^{(\text{EA})}_{p-1}))}$, respectively.

\subsection{Model 1: Two-site System}
To clarify the EA formulation of outcomes and reward, let us examine a system consisting of two sites. 
Concerning the action set, we assume that $\tilde{\mathcal{A}} = \{
\mathbin{\lower.3ex\hbox{$\buildrel\textstyle\rightarrow\over{\smash{\leftarrow}\vphantom{_{\vbox to.5ex{\vss}}}}$}}
\}$ consists of only one action $\mathbin{\lower.3ex\hbox{$\buildrel\textstyle\rightarrow\over{\smash{\leftarrow}\vphantom{_{\vbox to.5ex{\vss}}}}$}} $ which represents the process of moving from one site to another.
Furthermore, let us suppose that we have a set of three orthogonal EA bases ($m=3$) denoted as
$\left\{ {\left| 0 \right\rangle ,\left| 1 \right\rangle ,\left| 2 \right\rangle } \right\}$.
The EA quantum state in the EA Hilbert space $\mathcal{H}_{\text{EA}}$ is defined as a linear combination of the basis states of EA, as represented by 
Equation (\ref{eq:easuper}). Consequently, for a given quantum state $\tilde{\boldsymbol{s}}(s)$, we have 
\begin{align}	\label{eq:example_superposition_state}
{{\tilde{\boldsymbol{s}}} }({s_i}) \equiv \left| {\tilde{s}_i} \right\rangle 
&= \ket{s_i} \otimes \ket{\psi_{s_i}}
\nonumber\\
&= \left| {{s_i}} \right\rangle \otimes \Bigl( {{c _{i,0}}\left| 0 \right\rangle + {c _{i,1}}\left| 1 \right\rangle + {c _{i,2}}\left| 2 \right\rangle } \Bigl).
\end{align}
We consider the set of outcomes expressed in the EA as
\begin{align}
\label{eq:example1-outcomeset}
\Omega^{(\text{EA})} = \left\{\frac{\ket{0}+i\ket{1}}{\sqrt{2}}, 
\frac{\ket{0}-i\ket{1}}{\sqrt{2}},
\ket{2} \right\}.
\end{align}
The first and second outcome consist of both $\ket{0}$ and $\ket{1}$, meaning they contribute to the reward for both $\ket{0}$ and $\ket{1}$ simultaneously, but with different probability amplitudes. Finally, as we have only two sites and the agent cannot stay in a fixed site for two consecutive rounds of play, the state transition function $p(\tilde{\boldsymbol{s}}(s_{j}) | \tilde{\boldsymbol{s}}(s_{i}) , \tilde a_{k})$ is a deterministic function. 

With the setting mentioned above, let us calculate the value function for both the quantum states $\tilde{\boldsymbol{s}}(s_{1})$ and $\tilde{\boldsymbol{s}}(s_{2})$. By using Equation (\ref{eq:value_eamdp}) and assuming that the agent follows a stochastic policy $\pi(\cdot | \tilde{\boldsymbol{s}}(s))$, we observe that
\begin{subequations}
\begin{align}
V_{\pi}^{\tilde{\boldsymbol{s}}}(s_{1})
& = \sum_{\tilde a \in \tilde{\mathcal{A}}} \pi(\tilde a | \tilde{\boldsymbol{s}}(s_{1}))
\Bigl[ r(\tilde{\boldsymbol{s}}(s_2)) + \gamma V_{\pi}^{\tilde{\boldsymbol{s}}}(s_{2})\Bigl], 
\\
V_{\pi}^{\tilde{\boldsymbol{s}}}(s_{2})
& = \sum_{\tilde a \in \tilde{\mathcal{A}}} \pi(\tilde a | \tilde{\boldsymbol{s}}(s_{2}))
\Bigl[ r(\tilde{\boldsymbol{s}}(s_1)) + \gamma V_{\pi}^{\tilde{\boldsymbol{s}}}(s_{1})\Bigl]. 
\end{align}
\end{subequations}
In this case, the jump action occurs between both sites over an infinite number of steps, and the policy is deterministic. Consequently, we can easily obtain the optimal value function as follows
\begin{subequations}
\label{eq:optitwosite}
\begin{align}
V_{\pi^*}^{\tilde{\boldsymbol{s}}}({s_1}) = \frac{
{{ r(\tilde{\boldsymbol{s}}(s_2))} + \gamma 
\left( { r(\tilde{\boldsymbol{s}}(s_1)) } \right)}}
{{1 - {\gamma ^2}}},  \\
V_{\pi^*}^{\tilde{\boldsymbol{s}}}({s_2}) = \frac{
{{ r(\tilde{\boldsymbol{s}}(s_1))} + \gamma 
\left( { r(\tilde{\boldsymbol{s}}(s_2)) } \right)}}
{{1 - {\gamma ^2}}}. 
\end{align}
\end{subequations}

\begin{figure}[t]
\centering
\includegraphics[width=0.99\linewidth]{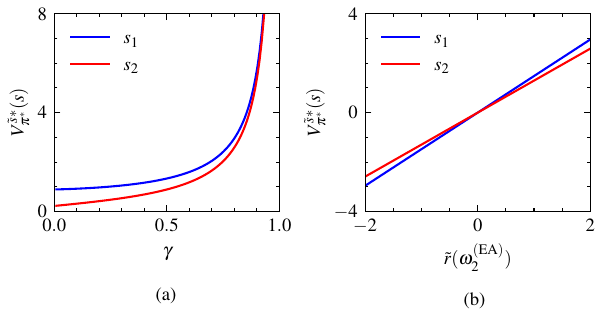}
\caption{The optimal value function for two sites example in the presence of EA with parameters 
${\tilde{\boldsymbol{r}} = (-1, 1, \tilde{r}(\omega^{(\text{EA})}_{2}))}$, 
${\boldsymbol c_1 = (\frac{2}{3}, \frac{2}{3}, \frac{1}{3})}$, and
${\boldsymbol c_2=(\frac{2}{3},\frac{1}{3},\frac{2}{3})}$. The set of outcomes is shown in Equation~(\ref{eq:example1-outcomeset}). 
(a) $\tilde{r}(\omega^{(\text{EA})}_{2})=2$ and different values of $\gamma$, (b) $\gamma=0.8$ and different values of $\tilde{r}(\omega^{(\text{EA})}_{2})$.
}
\label{fig:example_twosites}
\end{figure}
The optimal value functions for constant probability amplitudes $\boldsymbol c_1$ and $\boldsymbol c_2$ are shown in Fig.~\ref{fig:example_twosites}. 
The relationship between the optimal value function and the discount factor $\gamma$ is illustrated in Fig.~\ref{fig:example_twosites}(a), while the relationship with the outcome reward 
$\tilde{r}(\omega^{(\text{EA})}_{2})$ is shown in Fig.~\ref{fig:example_twosites}(b).
The results demonstrate that the computed values in Equation (\ref{eq:optitwosite}) is achieved in practice, confirming the existence of both an optimal value function and an optimal policy. When using complex probability amplitudes in Equation~(\ref{eq:example_superposition_state}), quantum interference leads to constructive or destructive patterns, resulting in either an increase or decrease in the optimal value function. For a detailed discussion, we refer to Appendix~\ref{appendix:exp1}.

\subsection{Model 2: More Complex Example of Many Site Systems}
Consider a two-dimensional lattice of size $L_x \times L_y$ with some obstacles and one goal site (terminal state) inside it. An agent navigates through this lattice and attempts to collect the maximum reward in the presence of EA. Each site contains an EA with different states, which are presented as EA quantum states. At each site, due to the presence of EA, the state has conflicting pieces of evidence. To clarify further, consider the $5 \times 5$ lattice shown in Fig.~\ref{fig:lattice-ea}(a). We consider an EA set of bases $\{\ket{0},\ket{1},\ket{2},\ket{3}\}$. These bases appear with respective amplitudes at each site $i$ as ${\mathbf{c}}_i=(c_{i,0},c_{i,1},c_{i,2},c_{i,3})$. Furthermore, all possible actions at each site of the lattice are represented in the respective site by vectors. Each action shows a possible move to the nearest neighbor site. In addition, there are two gray sites marked with $\textit{b}$ that serve as obstacles, into which the agent is forbidden from entering. The colors in Fig.~\ref{fig:lattice-ea}(b) are EA bases and the combination of those bases (with different amplitudes) creates EA in each site.

Time-independent EA quantum states can be constructed using Equation~(\ref{eq:easuper}) at each site. 
As an example for this experiment, we consider the EA quantum state for each site, with coordinates $(x,y)$, represented as 
\begin{subequations}
\label{eq:latticeea}
\begin{align}
	&\left| {{\psi _{(4,1)}}}(t) \right\rangle
	= \frac{1}{\sqrt{2}} \left| {{1}} \right\rangle
	+ \frac{1}{\sqrt{2}} \left| {{3}} \right\rangle,   \\
	&\left| {{\psi _{(3,3)}}}(t) \right\rangle
	= \frac{1}{\sqrt{2}} \left| {{2}} \right\rangle
	+ \frac{i}{\sqrt{2}} \left| {{3}} \right\rangle,   \\
	&\left| {{\psi _{(5,4)}}}(t) \right\rangle
	= \frac{2}{5} \left| {{0}} \right\rangle
	+ \frac{4}{5} \left| {{1}} \right\rangle
	+ \frac{1}{5} \left| {{2}} \right\rangle
	+ \frac{2}{5} \left| {{3}} \right\rangle,   \\
	&\left| {{\psi _{(1,5)}}}(t) \right\rangle
	= \frac{1}{\sqrt{2}} \left| {{1}} \right\rangle
	+ \frac{i}{\sqrt{2}} \left| {{2}} \right\rangle,   \\
	&\left| {{\psi _{(5,5)}}}(t) \right\rangle
	= \frac{1}{\sqrt{5}} \left| {{0}} \right\rangle
	+ \frac{1}{\sqrt{5}} \left| {{1}} \right\rangle
	+ \frac{i}{\sqrt{5}} \left| {{2}} \right\rangle
	+ \sqrt{\frac{2}{5}} \left| {{3}} \right\rangle,   \\
	&\left| {{\psi _{(\textit{others})}}}(t) \right\rangle = \ket{0}.
\end{align}
\end{subequations}
We consider the outcome reward vector as ${\tilde{\boldsymbol{r}} = (-1, -2, -3, \tilde{r}(\omega^{(\text{EA})}_3))}$, and a set of outcomes characterized by parameters $\phi_1$ and $\phi_2$, as follows
\begin{align}
\label{eq:lattice_outcome}
    \Omega^{(\text{EA})}(\phi_1,\phi_2)=\Bigg\{
	& \frac{\cos{\phi_1}\ket{0}+i\sin{\phi_1}\ket{1}}{\sqrt{2}},  \nonumber\\
    & \frac{i\sin{\phi_1}\ket{0}+\cos{\phi_1}\ket{1}}{\sqrt{2}}, \nonumber\\
	& \frac{\cos{\phi_2}\ket{2}+i\sin{\phi_2}\ket{3}}{\sqrt{2}}, \nonumber\\
    & \frac{i\sin{\phi_2}\ket{2}+\cos{\phi_2}\ket{3}}{\sqrt{2}}\Bigg\},
\end{align}
where $\phi_1$ and $\phi_2$ are adjustable parameters used to modify the results.

\begin{figure}[t]
\centering
\includegraphics[width=0.99\linewidth]{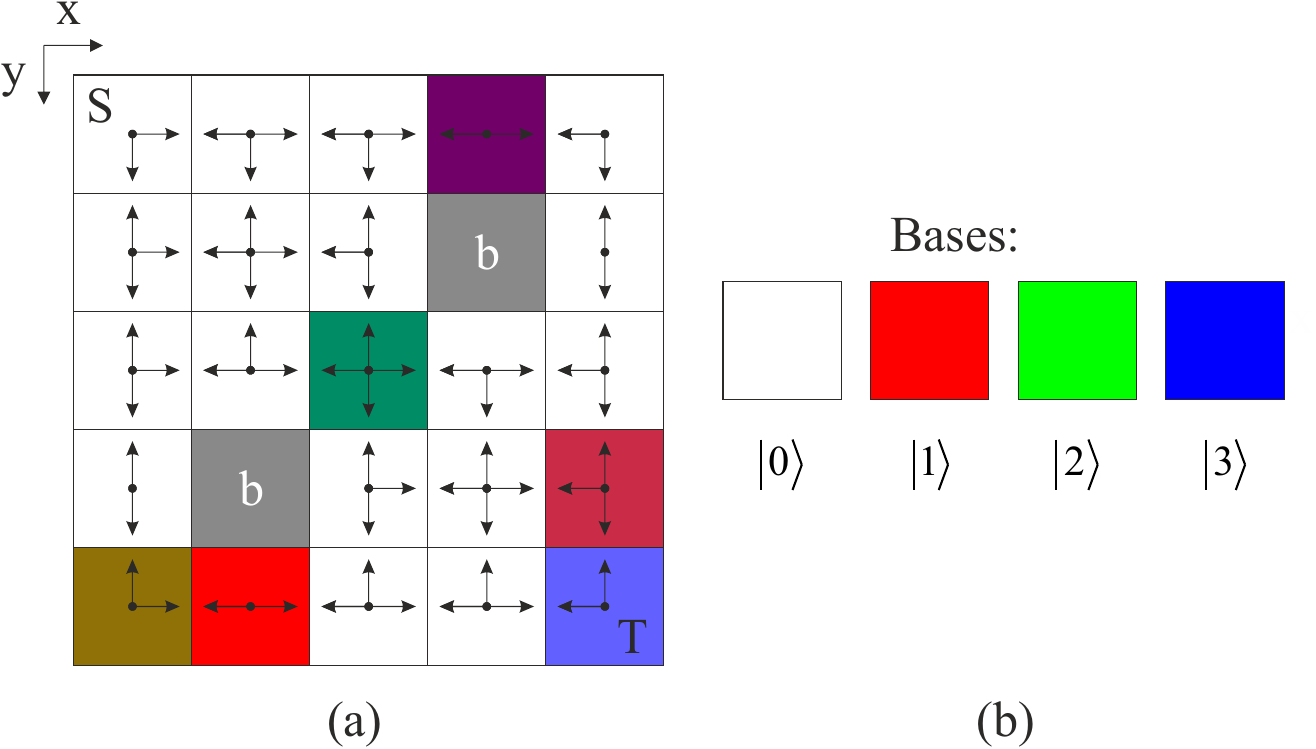}
\caption{
(a) A $5 \times 5$ lattice. Color combinations represent the sites with different EA quantum states.
(b) The set of EA bases shown with different colors.}
\label{fig:lattice-ea}
\end{figure}
In the lattice environment, the positive rewards encourage the agent to explore more and stay inside the lattice for a longer time. On the other hand, the negative rewards encourage the agent to leave the lattice as soon as possible. Our proposed algorithm, namely EA-epsilon-greedy Q-Learning (EA-QL), is outlined in Appendix~\ref{appen:qlearning} (Algorithm \ref{alg:eaql}). We apply EA-QL algorithm to determine the most beneficial path with the highest cumulative reward from the starting point to the terminal state. At the phase transition points, the optimal policy shifts to a new optimal policy, resulting in an adjustment in the trajectory in lattice. In this scenario, the interference effect of the complex probability amplitudes disappears.
Fig.~\ref{fig:examplelattice_phi} shows the effect of changing parameters $\phi_1$ and $\phi_2$ on the optimal value function.
As the parameters $\phi_1$ and $\phi_2$ vary, the optimal value function changes due to interference effects. These effects potentially lead to changes in the optimal policy and the optimal path of the agent.
When $\phi_1 =\phi_2 = 0$, the condition in Theorem~\ref{theorem:reward}, (${\ket{\omega^{(\text{EA})}}=\ket{j}}$), is fulfilled.
Further experimental results are presented in Appendix~\ref{appendix:exp2}.
\begin{figure}[b!]
\centering
\includegraphics[width=0.99\linewidth]{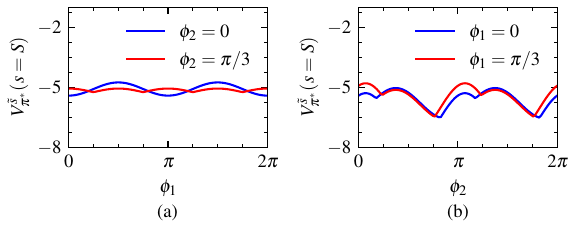}
\caption{The optimal value function in the lattice, given an EA with outcome rewards ${\tilde{\boldsymbol{r}} = (-1, -2, -3, 1)}$, a discount factor $\gamma = 0.9$, probability amplitudes in Equation~(\ref{eq:latticeea}), and the set of outcomes in Equation~(\ref{eq:lattice_outcome}). The effect of varying (a) $\phi_1$ and (b) $\phi_2$ on the optimal value function are shown.
}
\label{fig:examplelattice_phi}
\end{figure}

\textbf{Challenges and limitations:~} Implementing our proposed EA-MDP framework in practice and applying it to real-world problems is a challenging task. The main challenge is to convert classical data into quantum data, particularly in the form of superposition. 
In recent years, the conversion of classical data into quantum data (quantum states) has gained significant attention~\cite{biamonte2017quantum}.
Besides, a set of outcome representations is needed to enable such an implementation. This set of outcomes is essential for calculating the rewards, as it is used to determine the measurement and the reward. Selecting a suitable outcome set is crucial to ensure accurate reward calculations. In addition, in real-world experiments with non-separable outcome set, the number and dimension of outcomes may increase significantly. If $n$ is the number of the underlying states and $m$ is the number of EA bases, the memory needed to store the outcomes scales by a factor of $O((m n)^2)$. Similarly, the memory needed to store the separated quantum states scales by a factor of $O(m n)$. Furthermore, the computational time required to calculate the reward scales as $O((m n)^2)$. One may use sparse matrices to store outcomes when they have sparse structure to reduce memory and computational power. 
\textbf{Reproducibility Statement:} We ensure the reproducibility of all the experiments presented in this paper. Detailed descriptions of the experiments, including key implementation steps, algorithmic procedures, and parameter settings, are provided in both the main text and the Appendix. We utilized reference \cite{Sochorov2021PracticalPM} to blend the colors correctly and applied them to the EA states in Fig.~\ref{fig:lattice-ea}(a).

\section{Conclusion}
\label{sec:conclusion}
In this paper, we address a specific type of uncertainty known as epistemic ambivalence, which arises from conflicting information or contradictory experiences. We introduced the EA-MDP framework, grounded in the principle of superposition from quantum mechanics, where a quantum particle can exist in multiple states simultaneously. The quantum state contains information about both the underlying state and the related EA, which is encoded with complex probability amplitudes. Using quantum measurement, we calculated the probability of given outcomes based on the quantum state. An interesting direction for future research is to extend EA-MDP framework by considering time-dependent quantum states, time-dependent outcome sets, or multiple and entangled underlying states. Another possible future work is to extend our proposed framework by assuming partial state observability in EA-MDP or using non-stationary rewards.



\bibliography{references}
\bibliographystyle{icml2025}

\newpage
\appendix
\onecolumn
\section{Proofs}
\label{appdendix:proofs}
This section presents the detailed proofs of the theorems in the paper.

\subsection{Proof of Theorem~\ref{theorem:vqfunc}}
\label{append:vqfunc}
Based on the definition of $V^{{\tilde{\boldsymbol{s}}}}_\pi(s)$, we can write
\begin{align}
	V_{\pi}^{\tilde{\boldsymbol{s}}}(s) & \buildrel\textstyle.\over= 
	\mathbb{E}_\pi \Bigl[G_{t} \mid \tilde S_t = {\tilde{\boldsymbol{s}}}(s)\Bigl] \nonumber \\
	& = \mathbb{E}_\pi \Bigl[\tilde{r}_{t+1} + \gamma G_{t+1} \mid \tilde S_t = {\tilde{\boldsymbol{s}}}(s)\Bigl] \nonumber \\
	& = \sum\limits_{\tilde a \in \tilde A} \pi \big(\tilde a|{{\tilde{\boldsymbol{s}}}}(s)\big)\sum\limits_{s' \in {\cal S}} \sum\limits_r 
	p\big(\tilde{\boldsymbol{s}}(s'), r| \tilde{\boldsymbol{s}}(s),\tilde a\big)  \Bigl[r + \gamma {E_\pi }\left[ {G_{t + 1}}|{\tilde S_{t+1} = {\tilde{\boldsymbol{s}}}(s')} \right]\Bigl].
\end{align}
The environment provides the reward $r=\tilde{r}(\tilde \omega)$ to a possible outcome $\tilde \omega$ in each measurement on the state $\ket{\tilde s'}$. By substituting Equation~(\ref{eq:prob-dynamic}) into the above equation, we get
\begin{align}
V_{\pi}^{\tilde{\boldsymbol{s}}}(s) =
	\sum\limits_{\tilde a \in \tilde A} \pi \big(\tilde a|{{\tilde{\boldsymbol{s}}}}(s)\big)\sum\limits_{s' \in {\cal S}} \sum\limits_{\tilde \omega \in \tilde \Omega}
	{p\big(\tilde{\boldsymbol{s}}(s')|\tilde{\boldsymbol{s}}(s),\tilde a\big)}
	{P_{\tilde \omega}(\tilde{\boldsymbol{s}}(s'))} 
 \Bigl[\tilde{r}(\tilde \omega) + \gamma {E_\pi }\left[ {G_{t + 1}}|{\tilde S_{t+1} = {\tilde{\boldsymbol{s}}}(s')} \right]\Bigl].
\end{align}
A single measurement in quantum mechanics is insufficient because it provides only one possible outcome from the probabilistic distribution of the quantum state. In this way, just one measurement does not provide any information about the quantum state of the environment. 
That is why it is essential to perform numerous measurements on the quantum state $\ket{\tilde{s}'}$ and calculate the expectation value of the reward operator. This expectation value is the reward function of the agent in EA-MDP. 
By using Equation~(\ref{eq:rewardexptildeomega}) and sum over $\tilde \omega$ we have
\begin{align}
	V_{\pi}^{\tilde{\boldsymbol{s}}}(s) =
	\sum\limits_{\tilde a \in \tilde A} \pi \big(\tilde a|{{\tilde{\boldsymbol{s}}}}(s)\big)\sum\limits_{s' \in {\cal S}} 
	{p\big(\tilde{\boldsymbol{s}}(s')|\tilde{\boldsymbol{s}}(s),\tilde a\big)}
 \Bigl[ r\big(\tilde{\boldsymbol{s}}(s')\big) + \gamma {V_{\pi}^{\tilde{\boldsymbol{s}}}(s')} \Bigl].
\end{align}
The recursion for $Q_\pi^{\tilde{\boldsymbol{s}}}(s,\tilde a)$ can be generated in a similar manner.
Furthermore, it is important to mention that the relation between the value function $V_\pi^{\tilde{\boldsymbol{s}}}(s)$ and the action-value function $Q_\pi^{\tilde{\boldsymbol{s}}}(s,\tilde a)$ is given by
\begin{align}
	V_\pi^{{\tilde{\boldsymbol{s}}}}(s) =
	\sum\limits_{\tilde a \in \tilde{\mathcal{A}}} {\pi\big(\tilde a| {{\tilde{\boldsymbol{s}}}}(s)\big)Q_\pi^{{\tilde{\boldsymbol{s}}}}(s,\tilde a)}.
\end{align}

\subsection{Proof of Theorem~\ref{theorem:ea-bellman-contraction}}
\label{append:ea-bellman-contraction}
We need to show that the Bellman operator 
$T$ is a contraction with respect to the supremum norm ${\left\| . \right\|_\infty }$.
Let us consider $V_1^{\tilde{\boldsymbol{s}}}$ and $V_2^{\tilde{\boldsymbol{s}}}$ as any value functions. The difference between these value functions after applying the Bellman operator 
$T$ to them is given by
\begin{align}
	{\left\| {TV_1^{\tilde{\boldsymbol{s}}} - TV_2^{{\bf{\tilde s}}}} \right\|_\infty } = \mathop {\max }\limits_{s \in S} \left| {\left( {TV_1^{{\bf{\tilde s}}}} \right)(s) - \left( {TV_2^{{\bf{\tilde s}}}} \right)(s)} \right|.
\end{align}
Let us consider the Bellman operator for a value function as
\begin{align}
	\left( {TV_1^{{\bf{\tilde s}}}} \right)(s) &= \mathop {\max }\limits_{\tilde a \in \tilde A} \left[ \sum\limits_{s' \in S} {p\big({\bf{\tilde s}}(s')|{\bf{\tilde s}}(s),\tilde a\big)\Big({ r\big({\bf{\tilde s}}(s')\big) + \gamma V_1^{{\bf{\tilde s}}}(s')}\Big) } \right], \nonumber\\
	\left( {TV_2^{{\bf{\tilde s}}}} \right)(s) &= \mathop {\max }\limits_{\tilde a \in \tilde A} \left[ \sum\limits_{s' \in S} {p\big({\bf{\tilde s}}(s')|{\bf{\tilde s}}(s),\tilde a\big)\Big({ r\big({\bf{\tilde s}}(s')\big) + \gamma V_2^{{\bf{\tilde s}}}(s')}\Big) } \right].
\end{align}
The absolute difference is given by
\begin{align}
	\big| \left( {TV_1^{{\bf{\tilde s}}}} \right)(s) - \left( {TV_2^{{\bf{\tilde s}}}} \right)(s) \big| 
    \le \gamma \mathop {\max }\limits_{\tilde a \in \tilde A} \sum\limits_{s' \in S} {p\big({\bf{\tilde s}} (s')|{\bf{\tilde s}}(s),\tilde a\big)\left| {V_1^{{\bf{\tilde s}}}(s') - V_2^{{\bf{\tilde s}}}(s')} \right|}.
\end{align}
Since $\sum\limits_{s' \in S} {p\big({\bf{\tilde s}}(s')|{\bf{\tilde s}}(s),\tilde a\big) = 1}$ and $0 \le \gamma < 1$, we have 
\begin{align}
	{\left\| {TV_1^{{\bf{\tilde s}}} - TV_2^{{\bf{\tilde s}}}} \right\|_\infty } \le \gamma \mathop {\max }\limits_{s' \in S} \left| {V_1^{{\bf{\tilde s}}}(s') - V_2^{{\bf{\tilde s}}}(s')} \right|.
\end{align}
Taking the supremum over all states $s$ leads to
\begin{align}
	{\left\| {TV_1^{{\bf{\tilde s}}} - TV_2^{{\bf{\tilde s}}}} \right\|_\infty } \le \gamma {\left\| {V_1^{{\bf{\tilde s}}} - V_2^{{\bf{\tilde s}}}} \right\|_\infty }.
\end{align}
Since $0 \le \gamma < 1$, this inequality shows that 
$T$ is a contraction mapping.

\subsection{Proof of Theorem~\ref{theorem:ea-bellman-optimal}}
\label{append:ea-bellman-optimal}
Given that $T$ is a contraction mapping, the Banach fixed-point theorem ensures that $T$ has a fixed point $ V^{\tilde{\boldsymbol{s}}}_{\pi^*} $, which satisfies the following condition
\begin{align}
	TV^{\tilde{\boldsymbol{s}}}_{\pi^*} = V^{\tilde{\boldsymbol{s}}}_{\pi^*}.
\end{align}
The fixed point $V^{\tilde{\boldsymbol{s}}}_{\pi^*} $ of the Bellman operator 
$T$ is the optimal value function that satisfies the Bellman optimal equation. 
$V^{\tilde{\boldsymbol{s}}}_{\pi^*} $ is the optimal value function, meaning it maximizes the expected cumulative reward.
The existence of the fixed point 
$V^{\tilde{\boldsymbol{s}}}_{\pi^*} $
means that there exists an optimal policy 
$\pi^*$ that achieves this value function. Therefore, the EA-MDP has at least one optimal policy. 
Given the optimal value function 
$V^{\tilde{\boldsymbol{s}}}_{\pi^*} $, we define the corresponding optimal policy 
$\pi^*$ as
\begin{align}
	{\pi ^*}\big({\bf{\tilde s}}(s)\big) = \mathop {\arg \max }\limits_{\tilde a \in \tilde A} \left[ { \sum\limits_{s' \in S} {p\big({\bf{\tilde s}}(s')|{\bf{\tilde s}}(s),\tilde a\big)\Big({ r\big({\bf{\tilde s}}(s')\big) + \gamma V^{{\bf{\tilde s}}}(s')}\Big) } } \right].
\end{align}

\section{EA-QL Algorithm}
\label{appen:qlearning} 
In this section, We suggest Algorithm~\ref{alg:eaql} to address the epsilon-greedy Q-learning algorithm by including aspects of epistemic ambivalence. 
We propose an EA-epsilon-greedy strategy that includes a measure of EA. By considering the EA linked with each action's Q-value, the agent can make more informed decisions in a given state.
\begin{algorithm}[h]
    \caption{\label{alg:eaql}
        EA-Epsilon-greedy Q-Learning algorithm}
    \begin{algorithmic}
        \REQUIRE $\alpha$: learning rate, $\gamma$: discount factor, $\varepsilon$: a small number
         \STATE Initialize $Q^{\tilde{\boldsymbol{s}}}(s,\tilde a)$ arbitrarily
         \FOR{each episode}
         \STATE Initialize underlying state $s$ and quantum state $\tilde{\boldsymbol{s}}(s)$
         \REPEAT 
        \STATE $\tilde a \gets \text{choose-action } (Q,\tilde{\boldsymbol{s}}(s),\varepsilon)$
        \STATE take action $\tilde a$ and evolve $s$ to $s'$ and  $\tilde{\boldsymbol{s}}(s)$ to $\tilde{\boldsymbol{s}}(s')$
        \COMMENT{Calculate the reward using the quantum state $\tilde{\boldsymbol{s}}(s')$}
        \STATE calculate $ r(\tilde{\boldsymbol{s}}(s'))$ form quantum state $\tilde{\boldsymbol{s}}(s')$
        \STATE $Q^{\tilde{\boldsymbol{s}}}(s,\tilde a) \leftarrow Q^{\tilde{\boldsymbol{s}}}(s,\tilde a) + \alpha [ r(\tilde{\boldsymbol{s}}(s'))+ \gamma \mathop {\max }\limits_{\tilde a'} Q^{\tilde{\boldsymbol{s}}}(s',\tilde a') - Q^{\tilde{\boldsymbol{s}}}(s,\tilde a)]$
        \STATE $s \gets s'$ and $\tilde{\boldsymbol{s}}(s) \gets \tilde{\boldsymbol{s}}(s')$
        \UNTIL{$s$ is terminal}
        \ENDFOR
        \STATE \textbf{return} The $Q$-table that contains $Q^{\tilde{\boldsymbol{s}}}(s ,\tilde a)$ to determine the optimal policy $\pi^\ast$
    \end{algorithmic}
\end{algorithm}

\section{Reward Operator}
\label{append:rewardop}
\begin{theorem}
\label{theorem:rewardop}
Given an environment that provides an outcome reward function
$\tilde{r}: \tilde \Omega \to \mathbb{R}$ for each projective measurement outcome  $\ket{\tilde 
\omega} \in \tilde \Omega$, the reward operator $\mR$ can be expressed as
\begin{align}
\label{eq:exp-reward-op}
\mR=
\sum_{\tilde \omega\in \tilde \Omega} \tilde{r}(\tilde \omega) \ket{\tilde \omega}\bra{\tilde \omega}.
\end{align}
\end{theorem}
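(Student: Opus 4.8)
The plan is to verify that the operator $\mR$ defined in Equation~(\ref{eq:exp-reward-op}) reproduces the expectation-value formula for the reward that was already adopted as a definition in Equation~(\ref{eq:rewardexptildeomega}), namely $r(\tilde{\boldsymbol{s}}(s')) = \sum_{\tilde\omega} P_{\tilde\omega}(\tilde{\boldsymbol{s}}(s'))\,\tilde{r}(\tilde\omega)$, together with the Born-rule expectation-value formula~(\ref{eq:evbasic}), $\langle \mR\rangle_{\tilde s'} = \Braket{\tilde s'|\mR|\tilde s'}$. In other words, the content of the theorem is that the two natural ways of assigning a numerical reward to a quantum state — averaging the outcome rewards against the projective-measurement probabilities, versus taking the quantum-mechanical expectation value of a Hermitian reward observable — agree precisely when the observable is the spectral-type sum on the right-hand side of~(\ref{eq:exp-reward-op}).

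First I would fix an arbitrary next quantum state $\ket{\tilde s'} \equiv \tilde{\boldsymbol{s}}(s')$ and compute $\Braket{\tilde s'|\mR|\tilde s'}$ by substituting the candidate expression for $\mR$ and using linearity of the inner product:
\begin{align}
\Braket{\tilde s'|\mR|\tilde s'}
= \sum_{\tilde\omega\in\tilde\Omega} \tilde{r}(\tilde\omega)\,\Braket{\tilde s'|\tilde\omega}\Braket{\tilde\omega|\tilde s'}
= \sum_{\tilde\omega\in\tilde\Omega} \tilde{r}(\tilde\omega)\,\left\|\Braket{\tilde\omega|\tilde s'}\right\|^2 .
\end{align}
Then I would invoke the projective-measurement identity $\mP_{\tilde\omega} = \ket{\tilde\omega}\bra{\tilde\omega}$ stated just before Equation~(\ref{eq:rewardexptildeomega}), which gives $P_{\tilde\omega}(\tilde{\boldsymbol{s}}(s')) = \Braket{\tilde s'|\mP_{\tilde\omega}|\tilde s'} = \left\|\Braket{\tilde\omega|\tilde s'}\right\|^2$, so the last display equals $\sum_{\tilde\omega} P_{\tilde\omega}(\tilde{\boldsymbol{s}}(s'))\,\tilde{r}(\tilde\omega) = r(\tilde{\boldsymbol{s}}(s'))$ by Definition~(\ref{eq:rewardexptildeomega}). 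This establishes that $\langle \mR\rangle_{\tilde s'} = r(\tilde{\boldsymbol{s}}(s'))$, i.e.\ that $\mR$ is an operator whose expectation value is the reward function. I would also record the sanity checks that make $\mR$ a legitimate reward observable: it is Hermitian, since each $\ket{\tilde\omega}\bra{\tilde\omega}$ is Hermitian and the $\tilde{r}(\tilde\omega)$ are real; and because $\tilde\Omega$ is a complete orthonormal set, $\sum_{\tilde\omega}\ket{\tilde\omega}\bra{\tilde\omega} = \mI$, so the $\ket{\tilde\omega}$ are exactly the eigenstates of $\mR$ with eigenvalues $\tilde{r}(\tilde\omega)$, matching the physical picture that a measurement collapses the state onto an outcome $\ket{\tilde\omega}$ and pays reward $\tilde{r}(\tilde\omega)$.

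If uniqueness is intended, I would add a short argument that $\mR$ is the unique Hermitian operator that is diagonal in the outcome basis $\tilde\Omega$ and has the prescribed eigenvalues: expand any such operator in the orthonormal basis $\{\ket{\tilde\omega}\}$ and match diagonal entries. The main (and really only) obstacle here is conceptual rather than technical — one must be careful about what is being asserted and what is being assumed. The reward function $r$ was \emph{defined} in Equation~(\ref{eq:rewardexptildeomega}) as the outcome-weighted average, not derived from an operator, so the theorem should be read as: this particular operator $\mR$ realizes that definition as a Born-rule expectation value. The proof is then a one-line computation once the projector identity $P_{\tilde\omega} = \left\|\Braket{\tilde\omega|\tilde s'}\right\|^2$ and completeness/orthogonality of $\tilde\Omega$ are in hand; no inequality, limit, or fixed-point machinery is needed.
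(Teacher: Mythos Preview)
Your proposal is correct and follows essentially the same approach as the paper: both compute the chain of equalities linking $r(\tilde{\boldsymbol{s}}(s'))=\sum_{\tilde\omega}\tilde r(\tilde\omega)\|\braket{\tilde\omega|\tilde s'}\|^2$ to $\Braket{\tilde s'|\mR|\tilde s'}$ via the projective-measurement identity $\mP_{\tilde\omega}=\ket{\tilde\omega}\bra{\tilde\omega}$; the paper runs the equalities from $r$ toward $\mR$ while you run them from $\mR$ toward $r$, and you additionally note Hermiticity and the eigenstructure, but the substance is the same one-line computation.
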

\textbf{Proof:}\,
In a projective measurement, the operator 
$\mP_{\tilde \omega} $ is a Hermitian projection operator that can be expressed as $\mP_{\tilde \omega} = \ket{\tilde \omega}\bra{\tilde \omega}$.
The probability of measuring the outcome $\tilde \omega$ when the system is in the quantum state $\tilde{\boldsymbol{s}}(s')$ can be calculated using Equation~(\ref{eq:projective-prob}) as follows
\begin{align}
	P_{\tilde \omega}\big(\tilde{\boldsymbol{s'}}(s')\big) &=\Braket{\tilde{s}'|\mP_{\tilde \omega}|\tilde{s}'}
    = \Braket{\tilde{s}'|\tilde \omega}\Braket{\tilde \omega|\tilde{s}'} 
    =\left\| \Braket{{\tilde \omega}|\tilde{s}'}\right\|^2.
\end{align}
The expectation value of the reward can be calculated using Equation~(\ref{eq:rewardexptildeomega}), which gives
\begin{align}
	r\big(\tilde{\boldsymbol{s}}(s')\big) &= \sum_{\tilde \omega} 
	P_{\tilde \omega}\big(\tilde{\boldsymbol{s}}(s')\big)\tilde{r}({\tilde \omega}) \nonumber\\
	&=\sum_{\tilde \omega} \tilde{r}({\tilde \omega}) \left\| \Braket{{\tilde \omega}|\tilde{s}'}\right\|^2 \nonumber\\
	&=\sum_{\tilde \omega} \tilde{r}({\tilde \omega}) 
	\Braket{\tilde{s}'| \tilde \omega}\Braket{\tilde \omega|\tilde{s}'} \nonumber\\
	&= \bra{\tilde{s}'}\left(\sum_{\tilde \omega} \tilde{r}({\tilde \omega}) \ket{\tilde \omega}\bra{\tilde \omega}\right)\ket{\tilde{s}'} \nonumber\\
	&= \Braket{\tilde{s}'| \mR|\tilde{s}'} \nonumber\\
	&= \braket{\mR}_{\tilde{s}'},
\end{align}
where $\mR$ represents the reward operator, which is expressed as 
\begin{align}
	\mR=
	\sum_{\tilde \omega} \tilde{r}(\tilde \omega) \ket{\tilde \omega}\bra{\tilde \omega}.
\end{align}

The reward operator in Theorem~\ref{theorem:rewardop} is an observable quantity used to calculate the   expectation value of reward within the quantum mechanical framework. It makes a connection between each outcome $\ket{\tilde \omega}$ and its corresponding reward $\tilde{r}(\tilde \omega)$ by projecting the quantum state onto the  
outcome $\ket{\tilde \omega}$.

\section{Separated Outcomes in EA-MDP}
\label{sec:outcome-seperated}
In Section~\ref{sec:ProFor}, we demonstrated the EA-MDP theorems for general set outcomes $\ket{\tilde \omega}$, which may include multiple underlying states simultaneously. 
While formulating the quantum state ${\bf{\tilde s} (s)}$, we considered quantum states that include only a single underlying state $s$, as shown in Equation~(\ref{eq:eawavefunction}). Therefore, it is preferable to consider only one underlying state in the measurement.
In this case, the outcome $\ket{\tilde \omega}$ is separable into two independent components. The first component is used to measure the underlying state $s''$, while the other component is used to measure the EA quantum state $\ket{\omega^{(\text{EA})}} \in \mathcal{H}_{\text{EA}}$. 
In other words, 
$ \ket{\tilde \omega} = \ket{s''} \otimes \ket{\omega^{(\text{EA})}} $.
The finite set of measurment outcome in EA component is $\Omega^{(\text{EA})} =\{\ket{\omega^{(\text{EA})}}\} $.
We use ${\tilde{r}(\tilde \omega) = \tilde{r}(s'',\omega^{(\text{EA})})}$ to illustrate the separation between the underlying state and the EA outcome in the outcome reward function. We reformulate the reward operator given in Equation~(\ref{eq:exp-reward-op}) using this separation as follows
\begin{align}
	\label{eq:reward-op-1}
	\mR^{(\text{sp})} = 
	\sum_{s'', \omega^{(\text{EA})} }
	\tilde{r}(s'',\omega^{(\text{EA})})
	\big(\ket{s''}\bra{s''}\big)\otimes \big(\ket{\omega^{(\text{EA})}}\bra{\omega^{(\text{EA})}}\big),
\end{align}
where $\mR^{(\text{sp})}$ is the reward operator with separated outcome set.
The completeness relationship requires that 
\begin{align}
	&\sum_{s'' \in \mathcal{S}} \ket{s''}\bra{s''}=\mI_n, \quad \sum_{\omega^{(\text{EA})} \in \Omega^{(\text{EA})}} \ket{\omega^{(\text{EA})}}\bra{\omega^{(\text{EA})}}=\mI_m,
\end{align}
where $\mI_n$ is an identity operator with dimension $n$.
In Equation~(\ref{eq:reward-op-1}), the outcome reward function $\tilde{r}(s',\omega^{(\text{EA})}) $ depends on both $s'$ and $\omega^{(\text{EA})}$. 
The dependency of $s'$ could be transferred to one of the EA components of the next quantum states by introducing an additional dimension to the state.
\begin{assumption}
	\label{assumption:ea-reward}
	We assume that the environment provides the reward to the EA component of the outcome in separated outcomes, $\ket{\omega^{(\text{EA})}}$. 
\end{assumption}
This goal can be achieved simply by setting 
\begin{align}
	\tilde{r}(s',\omega^{(\text{EA})}) \equiv \tilde{r}(\omega^{(\text{EA})}).
\end{align}
The expectation value of the reward, given the next quantum state $\tilde{\boldsymbol{s}}(s')$ and separated outcomes, is calculated as follows
\begin{align}
r\big(\tilde{\boldsymbol{s}}(s')\big) &= \Braket{{\mR}^{(\text{sp})}}_{\tilde{s}'}
=
	\Braket{\tilde{s}'|{\mR}^{(\text{sp})}| \tilde{s}'} 
	= \sum_{ \omega^{(\text{EA})} }
	\tilde{r}(\omega^{(\text{EA})}) 
	\left\|
	\Braket{\omega^{(\text{EA})} |\psi_{s'}}\right\|^2.
	\label{eq:expreward-EA}
\end{align}
This relation shows how the expectation value of the rewards depends on $\tilde{\boldsymbol{s}}(s')$ and the corresponding EA quantum state. 

Under this assumption, the reward operator in Equation~(\ref{eq:reward-op-1}) for the separated outcomes is reduced to 
\begin{align}
	\mR^{(\text{sp})} &= 
\big(\sum_{s''}\ket{s''}\bra{s''}\big)\otimes \big(\sum_{\omega^{(\text{EA})} }\tilde{r}(\omega^{(\text{EA})})\ket{\omega^{(\text{EA})}}\bra{\omega^{(\text{EA})}}\big)=\mI_n \otimes {\mR^{(\text{EA})}},
	\label{eq:reward-op-reduced}
\end{align}
where $\mR^{(\text{EA})}$ is EA reward operator, represented as
\begin{align}
	\mR^{(\text{EA})} = \sum_{\omega^{(\text{EA})} }\tilde{r}(\omega^{(\text{EA})})\ket{\omega^{(\text{EA})}}\bra{\omega^{(\text{EA})}}.
\end{align}
The EA reward operator
$\mR^{(\text{EA})}$ allows for the measurement of the reward associated with the EA quantum states.

In order to calculate the overlap between each $\ket{\omega^{(\text{EA})}}$ and
the EA component of 
$\tilde{\mathbf{s}}(s')$, we expand each $\ket{\omega^{(\text{EA})}}$ in terms of the EA basis states $\ket{j}$ using a linear map. Afterwards, by using the inner product of quantum states, we compute the overlap. 
Each basis $\ket{j}$ can be considered a piece of evidence, and when each outcome involves multiple bases, it indicates that rewards are given based on the presence of these pieces of evidence. When computing the quantum probability of an outcome, we are essentially determining how closely this outcome aligns with the quantum state. The interference effect can then either amplify or diminish this probability.

In a special case, the linear map is bijective, meaning that every member in the set $\Omega^{(\text{EA})}$ has a one-to-one correspondence with a distinct element in the EA basis, denoted as $\ket{\omega^{(\text{EA})}}=\ket{j}$.
We have two more theorems for this bijective mapping.

\begin{theorem}
	\label{theorem:prob}
	A mapping $\boldsymbol{w}: \tilde{\mathcal{S}} \rightarrow [0,1]^m$ determines the model for each individual state. 
	For a given quantum state $\tilde{\boldsymbol{s}} \in \tilde{\mathcal{S}}$, the vector $\boldsymbol{w}\big(\tilde{\boldsymbol{s}}(s)\big)$ specifies all ratios of the quantum state $\ket{\tilde{s}}$ being in all the EA quantum basis states $\ket{j}$ with underlying state $s$. 
	In other words,
	\begin{align}
		\boldsymbol{w}\big(\tilde{\boldsymbol{s}}(s)\big)[j] & = c_{s,j}^2
		\hspace{10mm} \textit{for}\quad j=0, 1, \dots, m-1.
	\end{align}
\end{theorem}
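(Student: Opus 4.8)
The plan is to unpack the definitions introduced in Section~\ref{sec:outcome-seperated} and Equation~(\ref{eq:easuper}), and to reduce the statement to the Born rule together with the assumed bijectivity of the linear map. First I would recall that, in the bijective case under consideration, each outcome in $\Omega^{(\text{EA})}$ coincides with a distinct EA basis state, i.e.\ $\ket{\omega^{(\text{EA})}} = \ket{j}$ for $j = 0, 1, \dots, m-1$, so that $\{\ket{j}\}$ is an orthonormal set with $\braket{j|k} = \delta_{jk}$. Writing the EA component of the quantum state $\tilde{\boldsymbol{s}}(s)$ as $\ket{\psi_s} = \sum_{k=0}^{m-1} c_{s,k}\ket{k}$ from Equation~(\ref{eq:easuper}) (using Assumption~\ref{ass:time-independence} to drop the time index), I would compute the overlap $\braket{j|\psi_s} = \sum_k c_{s,k}\braket{j|k} = c_{s,j}$, hence $\|\braket{j|\psi_s}\|^2 = \|c_{s,j}\|^2$.

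Next I would identify this overlap-squared with the quantity $\boldsymbol{w}(\tilde{\boldsymbol{s}}(s))[j]$ that the theorem asserts to equal $c_{s,j}^2$. The intended reading is that $\boldsymbol{w}(\tilde{\boldsymbol{s}}(s))[j]$ is, by definition, the probability that a projective measurement of the EA quantum state yields the basis outcome $\ket{j}$; invoking Equation~(\ref{eq:projective-prob}) with $\mP_{\ket{j}} = \ket{j}\bra{j}$ gives $P_{\ket{j}}(\tilde{\boldsymbol{s}}(s)) = \braket{\psi_s | j}\braket{j | \psi_s} = \|\braket{j|\psi_s}\|^2 = \|c_{s,j}\|^2$. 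Under the convention (implicit in the paper, since it writes $c_{s,j}^2$ rather than $\|c_{s,j}\|^2$ in the statement) that the amplitudes relevant to $\boldsymbol{w}$ are taken in magnitude, this is exactly $c_{s,j}^2$, which establishes the claimed identity for each $j$. I would also note that $\sum_{j=0}^{m-1}\boldsymbol{w}(\tilde{\boldsymbol{s}}(s))[j] = \sum_j \|c_{s,j}\|^2 = 1$ by the normalization condition on the superposition, so that $\boldsymbol{w}$ indeed lands in $[0,1]^m$ (in fact in the probability simplex), confirming that the map is well-defined as stated.

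The main obstacle is not a calculational one but a bookkeeping one: the statement writes $c_{s,j}^2$ where the squared magnitude $\|c_{s,j}\|^2$ is what the measurement formalism produces, and the two agree only because in this bijective special case the relevant projector probabilities are real and no cross-terms survive. So the crux of the argument is to make precise that, because $\ket{\omega^{(\text{EA})}} = \ket{j}$ is itself a basis state, the projective measurement onto $\ket{j}$ picks out a single amplitude with no interference, and hence $\boldsymbol{w}(\tilde{\boldsymbol{s}}(s))[j]$ reduces to the bare squared modulus $|c_{s,j}|^2$, which the paper writes as $c_{s,j}^2$. Once that reduction is articulated, the remainder is an immediate substitution, and I would close by remarking that the vector $\boldsymbol{w}(\tilde{\boldsymbol{s}}(s))$ thus encodes the full classical probability profile of the EA state over its bases.
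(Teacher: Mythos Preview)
Your proposal is correct and follows essentially the same route as the paper. The only minor presentational difference is that the paper's proof works with the full tensor-product state $\ket{s}\otimes\ket{\psi_s}$ and projects onto $\ket{s}\otimes\ket{j}$, invoking the mixed-product property of the Kronecker product to factor out the trivial $\braket{s|s}=1$ before reducing to the EA component, whereas you work directly in $\mathcal{H}_{\text{EA}}$ from the outset; the core computation---orthonormality of $\{\ket{j}\}$ collapsing $\braket{j|\psi_s}$ to $c_{s,j}$---is identical.
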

%
\textbf{Proof:} \,
To determine the coefficient of the individual EA basis state, we calculate the overlap between states.
$\left| s \right\rangle \otimes \left| j \right\rangle $
and the corresponding quantum state $\tilde{\boldsymbol{s}}(s)$.
Therefore, the overlap, as introduced in Sec.~\ref{sec:quantumbasic}, can be written as
\begin{align}
	\boldsymbol{w}\big({\tilde{\boldsymbol{s}}}(s)\big)[j] & \buildrel\textstyle.\over= 
 \mathbb{P}\Bigl[\left| {{s}} \right\rangle \otimes \left| {{j}} \right\rangle \in \tilde{\boldsymbol{s}}(s)\Bigl],
	\hspace{5mm} \textit{for}\quad j=0, 1, \dots, m-1 \nonumber \\
	& = {\left\| {\Bigl( {\left\langle s \right| \otimes \left\langle j \right|} \Bigl)\Bigl( {\left| \tilde{\textbf{s}} \right\rangle } \Bigl)} \right\|^2} \nonumber \\
	& = {\left\| {\Bigl( {\left\langle s \right| \otimes \left\langle j \right|} \Bigl)\Bigl( {\left| s \right\rangle \otimes \left| {{\psi_{s}}} \right\rangle } \Bigl)} \right\|^2}\nonumber \\
	& = {\left\| {\left\langle {s}
			\mathrel{\left | {\vphantom {s s}}
				\right. \kern-\nulldelimiterspace}
			{s} \right\rangle \left\langle {j}
			\mathrel{\left | {\vphantom {j {\psi_{s}}}}
				\right. \kern-\nulldelimiterspace}
			{\psi_{s}} \right\rangle } \right\|^2},
\end{align}
where we used the mixed product property of the Kronecker product,
\begin{align}
	(\mA \otimes \mB)(\mC \otimes \mD) = \mA \mC \otimes \mB\mD.
\end{align}
The state $\left| {{\psi _s}} \right\rangle $ is a linear combination of the EA basis states. By using Eq. \ref{eq:easuper}, we can proceed as follows
\begin{align}
	\boldsymbol{w}\big(\tilde{\boldsymbol{s}}(s)\big)[j]
	 = {\left\| {\sum\limits_{j' = 0}^{m - 1} {{c_{s,j'}}\left\langle {j}
				\mathrel{\left | {\vphantom {j {j'}}}
					\right. \kern-\nulldelimiterspace}
				{{j'}} \right\rangle } } \right\|^2}
	 = {\left\| {\sum\limits_{j' = 0}^{m - 1} {{c_{s,j'}}{\delta _{j,j'}}} } \right\|^2}
	 = \left\| c_{s,j}\right\|^2,
\end{align}
where we used orthogonality features between two quantum basis states $\Braket{j|j'}=\delta_{j,j'}$ and
$\delta _{j,j'}$ is the Kronecker delta, defined as 
\begin{align}
	\label{eq:kronecker}
	{\delta _{j,j'}} = \left\{ {\begin{array}{*{20}{c}}
			1 & {j = j'} \\
			0 & {j \ne j'}
	\end{array}} \right.
\end{align}
$ \left\| c_{s,j}\right\|^2$ demonstrates the ratio of presence at state $s$ with EA basis $j$. 

In our problem, $\boldsymbol{w}\big(\tilde{\boldsymbol{s}}(s)\big)[j]$ represents the probability of the quantum state $\tilde{\boldsymbol{s}}(s)$ being in the EA basis state $\ket{j}$ with the underlying state $\ket{s}$.

\begin{theorem}
	\label{theorem:reward}
	Let the environment provide outcome rewards for the EA outcomes $\ket{\omega^{(\text{EA})}} \in \Omega^{(\text{EA})}$, with a bijective mapping between
	$\ket{\omega^{(\text{EA})}}$ and $\ket{j}$, such that ${\ket{\omega^{(\text{EA})}}=\ket{j}}$.
	The expectation value of the reward operator with respect to
	$\tilde{\textbf{s}}(s')$ is given by
	\begin{align}
		\label{eq:rewardev}
		r\left( \boldsymbol{\tilde s}(s') \right) & = \sum\limits_{j'} \tilde{r}(j'){\boldsymbol{w} ( \boldsymbol{\tilde s}(s'))[j]}. 
	\end{align}
\end{theorem}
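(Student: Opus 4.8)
The plan is to combine the general reward-expectation formula from Equation~(\ref{eq:expreward-EA}) with the overlap computation established in Theorem~\ref{theorem:prob}, specialized to the bijective case $\ket{\omega^{(\text{EA})}} = \ket{j}$. First I would start from the expression
\begin{align}
r\big(\tilde{\boldsymbol{s}}(s')\big) = \sum_{\omega^{(\text{EA})}} \tilde{r}(\omega^{(\text{EA})}) \left\| \Braket{\omega^{(\text{EA})}|\psi_{s'}} \right\|^2,
\end{align}
which is exactly Equation~(\ref{eq:expreward-EA}) and holds under Assumption~\ref{assumption:ea-reward} (the environment pays the reward to the EA component of the outcome). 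This already reduces the problem to evaluating each overlap $\left\| \Braket{\omega^{(\text{EA})}|\psi_{s'}} \right\|^2$.

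Next I would invoke the bijective hypothesis: since each $\ket{\omega^{(\text{EA})}}$ coincides with some EA basis state $\ket{j}$, the sum over outcomes becomes a sum over $j = 0, 1, \dots, m-1$, and each overlap specializes to $\left\| \Braket{j|\psi_{s'}} \right\|^2$. Expanding $\ket{\psi_{s'}} = \sum_{j'} c_{s',j'} \ket{j'}$ via Equation~(\ref{eq:easuper}) and using orthonormality $\Braket{j|j'} = \delta_{j,j'}$ gives $\left\| \Braket{j|\psi_{s'}} \right\|^2 = \|c_{s',j}\|^2$ — this is the same computation carried out in the proof of Theorem~\ref{theorem:prob}, so I would simply cite it, noting that $\left\| \Braket{j|\psi_{s'}} \right\|^2 = \boldsymbol{w}\big(\tilde{\boldsymbol{s}}(s')\big)[j]$ by the definition/conclusion of that theorem.

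Finally I would substitute this identification back into the reduced reward formula to obtain
\begin{align}
r\big(\tilde{\boldsymbol{s}}(s')\big) = \sum_{j'} \tilde{r}(j')\, \boldsymbol{w}\big(\tilde{\boldsymbol{s}}(s')\big)[j'],
\end{align}
which is Equation~(\ref{eq:rewardev}), completing the proof. (One should be slightly careful about the index bookkeeping between $j$ in the overlap and $j'$ as the summation label in the statement, but this is purely cosmetic since the bijection is the identity on indices.) Honestly, this proof is essentially a corollary: there is no real obstacle, and the only ``work'' is chaining together Equation~(\ref{eq:expreward-EA}), the bijectivity assumption, and Theorem~\ref{theorem:prob}. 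If anything, the subtlest point is making sure the hypotheses line up — in particular that Assumption~\ref{assumption:ea-reward} is in force so that $\tilde{r}(s',\omega^{(\text{EA})})$ legitimately reduces to $\tilde{r}(\omega^{(\text{EA})}) = \tilde{r}(j')$, which is what lets us write the outcome reward as a function of the basis index alone.
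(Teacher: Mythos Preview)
Your proposal is correct and follows essentially the same route as the paper's own proof. The only cosmetic difference is that the paper begins from the operator expression $\langle \tilde{s}'|\mR|\tilde{s}'\rangle$ and re-derives the reduction to $\langle \psi_{s'}|\mR^{(\text{EA})}|\psi_{s'}\rangle$ via the tensor factorization $\mR^{(\text{sp})}=\mI_n\otimes\mR^{(\text{EA})}$ before expanding in the basis, whereas you cite the already-established Equation~(\ref{eq:expreward-EA}) and Theorem~\ref{theorem:prob} directly; both then collapse the sum using $\braket{j|j'}=\delta_{j,j'}$ in the same way.
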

\textbf{Proof:} \,
To calculate the expectation value of the reward operator with Equation~(\ref{eq:expreward-EA}), we use Equation~(\ref{eq:evbasic}) which is explained in the Section~\ref{sec:quantumbasic}.
The expectation value of the reward operator with respect to the quantum state
$\tilde{\textbf{s}}(s')$ can be calculated as follows:
\begin{align}
	r\left( \boldsymbol{\tilde s}(s') \right) &=
	{\left\langle {\mR} \right\rangle _{\tilde{\bf{s}}(s')}} \nonumber\\
	&= \left\langle {{{\tilde{\bf{s}}}}(s')\left| {\mR} \right|{\tilde{\bf{s}}}(s')} \right\rangle \nonumber \\
	& = \Bigl( {\left\langle {s'} \right| \otimes \left\langle {{\psi _{s'}}} \right|} \Bigl)\mR\Bigl( {\left| {s'} \right\rangle \otimes \left| {{\psi _{s'}}} \right\rangle } \Bigl)
\end{align}
By replacing the reward operator with the one defined in Equation~(\ref{eq:reward-op-reduced}), we obtain
\begin{align}
	r\left( \boldsymbol{\tilde s}(s') \right) &= 
	\Bigl( {\left\langle {s'} \right| \otimes \left\langle {{\psi _{s'}}} \right|} \Bigl)
	\Bigl( \mI_n \otimes {\mR^{(\text{EA})}} \Bigl)
	\Bigl( {\left| {s'} \right\rangle \otimes \left| {{\psi _{s'}}} \right\rangle } \Bigl) \nonumber\\
	&= \Braket{s'|\mI_n |s'}
	\Braket{{\psi _{s'}}|{\mR^{(\text{EA})}}|{\psi _{s'}}} \nonumber \\
	&= \Braket{{\psi _{s'}}|{\mR^{(\text{EA})}}|{\psi _{s'}}},
\end{align}
where we used $ \Braket{s'|\mI_n |s'}= \Braket{s'|s'}=1$.
Assuming this theorem's proposition $\ket{\omega^{(\text{EA})}}=\ket{j}$ and expanding $\ket{\psi _{s'}}$ in the basis $\ket{j}$, we obtain 
\begin{align}
	r\left( \boldsymbol{\tilde s}(s') \right) &= \left( {\sum\limits_{j = 0}^{m - 1} {c_{s',j}^*\left\langle j \right|} } \right)\left( {\sum\limits_{j' = 0}^{m - 1} {{\tilde{r}({j'})}\left| {j'} \right\rangle \left\langle {j'} \right|} } \right) \nonumber \\
	&\times \left( {\sum\limits_{j'' = 0}^{m - 1} {{c_{s',j''}}\left| {j''} \right\rangle } } \right)\nonumber \\ 
	&= \sum\limits_{j,j',j''} {{\tilde{r}(j')}c_{s',j}^*{c_{s',j''}}}
	\Braket{j|j'} \Braket{j'|j''} \nonumber
	\\
	& = \sum\limits_{j,j',j''} {{\tilde{r}(j')}c_{s',j}^*{c_{s',j''}}{\delta _{j,j'}}{\delta _{j',j''}}} \nonumber \\
	& = \sum\limits_{j'} {\tilde{r}(j') {{\left\| {{c_{s',j'}}} \right\|}^2}} \nonumber \\
	& = \sum\limits_{j'} {{\tilde{r}(j') \boldsymbol{w}^{}\big(\tilde{\boldsymbol{s}}(s')\big)[j'] }}.
\end{align} 
%
Each complex probability amplitude can be expressed as $c_{s,j}=r_{s,j}e^{i \theta_{s,j}}$, where $r_{s,j}$ and $\theta_{s,j}$ represent the magnitude and the phase, respectively. 
In a bijective mapping, the phase disappears, resulting in the expression ${|c_{s,j}|^2 = r_{s,j}^2}$. 
This implies that neither constructive nor destructive quantum interference is present. In this scenario, the reward is equivalent to the standard expected value of the reward in an MDP without any quantum mechanical features. In this case, we only calculate the probability of each piece of evidence.

\section{Additional Experimental Results}
\subsection{Additional Experimental Results On Model 1: Two-site System}
\label{appendix:exp1}

\begin{figure}[]
	\centering
	\includegraphics[width=0.7\linewidth]{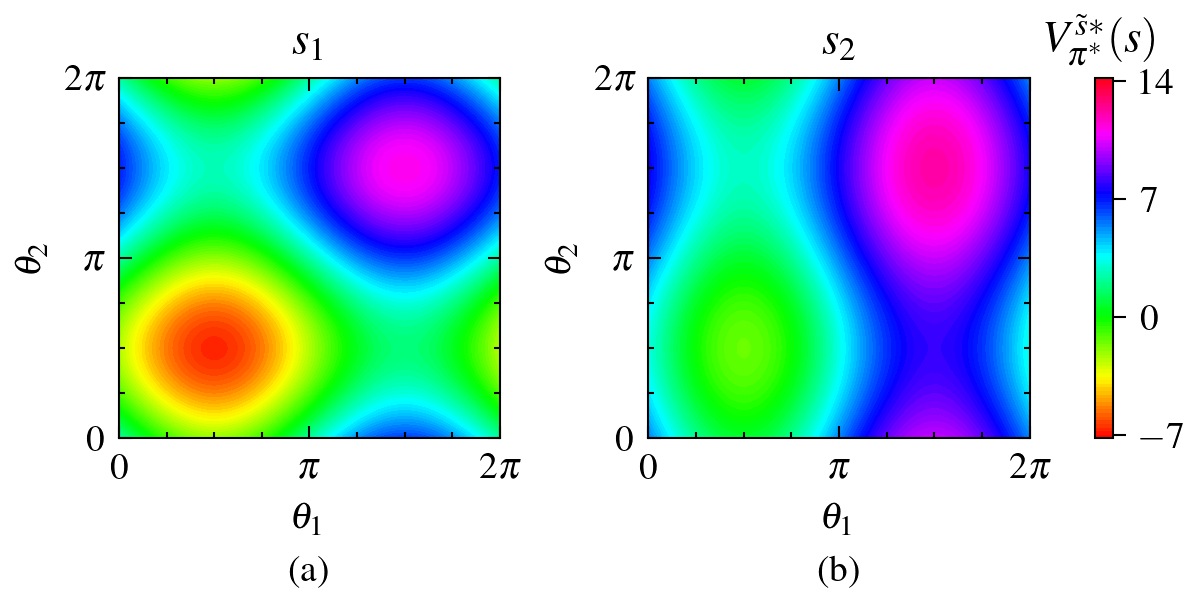}
	\caption{
		The optimal value function for the two-site example in the presence of EA is computed with parameters ${\tilde{\boldsymbol{r}} = (-1, 1, 2)}$, 
		${\boldsymbol c_1 = (\frac{2}{3}, \frac{2}{3} e^{i \theta_1}, \frac{1}{3})}$, and
		${\boldsymbol c_2=(\frac{2}{3},\frac{1}{3} e^{i \theta_2},\frac{2}{3})}$. 
		The set of outcomes is defined in Equation~(\ref{eq:example1-outcomeset}). 
		(a) $V^{\tilde{\boldsymbol{s}}*}({s_1})$, (b) $V^{\tilde{\boldsymbol{s}}*}({s_2})$ for different values of $\theta_1$ and $\theta_2$.
	} 
	\label{fig:example_twosites_theta}
\end{figure}
Complex probability amplitudes 
cause interference in quantum mechanics, leading to constructive or destructive interference patterns (increasing and decreasing the probability of the outcome.). 
In Fig.~\ref{fig:example_twosites_theta}, the impact of complex probability amplitude (using a factor $e^{i\theta}$, where $\theta$ is phase factor) on the optimal value function is demonstrated. 
By employing probability amplitudes ${\boldsymbol c_1 = (\frac{2}{3}, \frac{2}{3} e^{i \theta_1}, \frac{1}{3})}$ and ${\boldsymbol c_2=(\frac{2}{3},\frac{1}{3} e^{i \theta_2},\frac{2}{3})}$, we observe significant changes in the optimal value function as phase factors
$\theta_1$ and $\theta_2$ are modified. 

\subsection{Additional Experimental Results On Model 2: More Complex Example of Many Site Systems}
\label{appendix:exp2}

\begin{figure}[]
\centering
\includegraphics[width=0.7\linewidth]{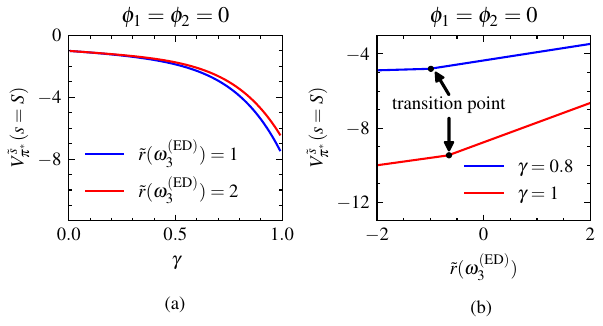}
\caption{
The optimal value function in the  lattice with EA is computed using outcome rewards
${\tilde{\boldsymbol{r}} = (-1, -2, -3, \tilde{r}(\omega^{(\text{EA})}_3))}$ and probability amplitudes as presented in Equation~(\ref{eq:latticeea}). The set of outcomes is detailed in Equation~(\ref{eq:lattice_outcome}), with $\phi_1=\phi_2=0$. The optimal value function is shown in two cases: (a) for different values of $\gamma$ and (b) for different values of $\tilde{r}(\omega^{(\text{EA})}_3)$. At the transition point, the trajectory that maximizes rewards shifts to a new one.}
\label{fig:examplelattice_theta0}
\end{figure}
\begin{figure}[]
\centering
\includegraphics[width=0.7\linewidth]{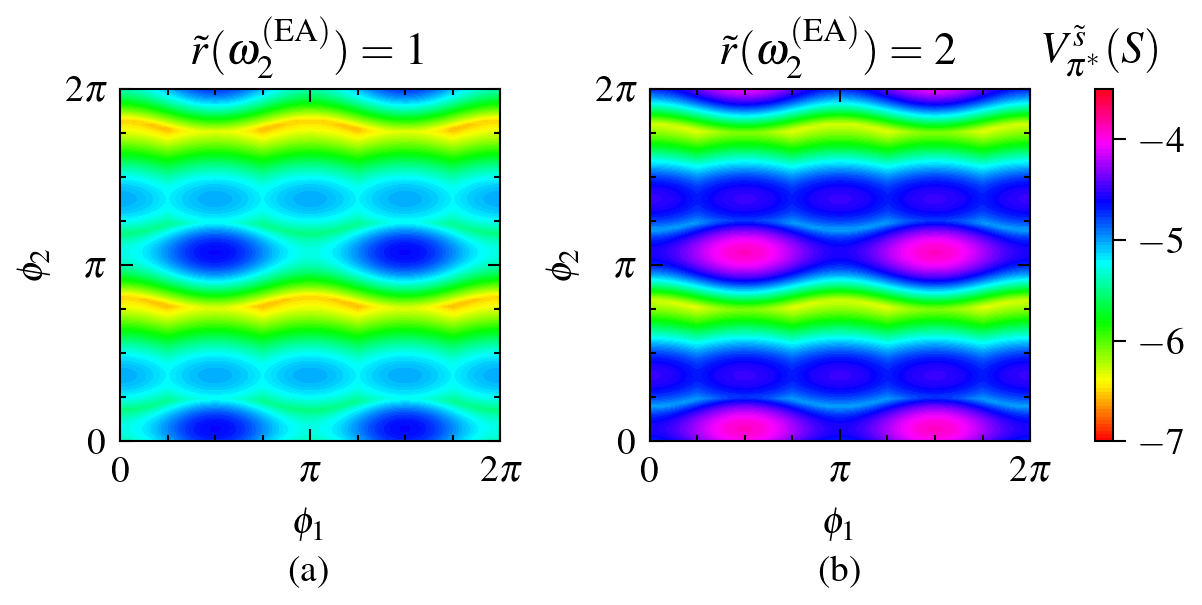}
\caption{
The optimal value function for the lattice example in the presence of EA, with outcome rewards 
${\boldsymbol{r} = (-1, -2, -3, \tilde{r}(\omega^{(\text{EA})}_{2}))}$, a discount factor
$\gamma=0.9$, probability amplitudes as presented in Equation~(\ref{eq:latticeea}), and the outcomes as listed in Equation~(\ref{eq:lattice_outcome}). The effects are illustrated for
(a) $\tilde{r}(\omega^{(\text{EA})}_{2})=1$ and (b) $\tilde{r}(\omega^{(\text{EA})}_{2})=2$, showing the oscillation of the optimal value function.} 
\label{fig:examplelattice_contour}
\end{figure}

In Fig.~\ref{fig:examplelattice_theta0}, the optimal value function for different values of discount $\gamma$ and reward for ${\phi_1 =\phi_2 = 0}$ is shown. 
The contour plot shown in Fig.~\ref{fig:examplelattice_contour} displays the entire phase diagram of the optimal value function for parameters $\phi_1$ and $\phi_2$ within the range 
$[0,2\pi]$. 
\end{document}